\documentclass{article}


\usepackage[final]{neurips_2023}

\usepackage{caption} 
\captionsetup[table]{skip=10pt}




\usepackage[utf8]{inputenc} 
\usepackage[T1]{fontenc}    
\usepackage{hyperref}       
\usepackage{url}            
\usepackage{booktabs}       
\usepackage{amsfonts}       
\usepackage{nicefrac}       
\usepackage{microtype}      
\usepackage{xcolor}         
\usepackage{mathextra}
    \usepackage{algorithm2e}
\usepackage[normalem]{ulem}

\usepackage[capitalise]{cleveref}
\crefname{assumption}{Assumption}{Assumptions}  

%

\author{%
  Johannes Kirschner\\
  Department of Computer Science\\
  University of Alberta\\
  \texttt{jkirschn@ualberta.ca} \\
   \And
   Seyed Alireza Bakhtiari \\
   Department of Computer Science\\
   University of Alberta \\
   \texttt{sbakhtia@ualberta.ca} \\
   \And
   Kushagra Chandak \\
   Department of Computer Science\\
   University of Alberta \\
   \texttt{kchandak@ualberta.ca} \\
   \And
   Volodymyr Tkachuk \\
   Department of Computer Science\\
   University of Alberta \\
   \texttt{vtkachuk@ualberta.ca} \\
   \And
   Csaba Szepesvári \\
   Department of Computer Science\\
   University of Alberta \\
   \texttt{szepesva@ualberta.ca} \\
}

\newcommand{\rdec}{\text{dec}}
\newcommand{\dec}{\text{dec}}

\newcommand{\acfpacdec}{\text{{\normalfont pac-dec}}^{ac,f}}
\newcommand{\acdec}{\text{{\normalfont dec}}^{ac}}
\newcommand{\acfdec}{\text{{\normalfont dec}}^{ac,f}}
\newcommand{\odec}{\text{{\normalfont dec}}^{o}}
\newcommand{\cdec}{\text{{\normalfont dec}}^{c}}
\newcommand{\dc}{\text{{\normalfont dc}}}

\newcommand{\KL}{D_{\text{KL}}}
\newcommand{\co}{\text{{\normalfont co}}}
\newcommand{\Est}{\text{{\normalfont Est}}}
\newcommand{\EST}{\text{{\normalfont EST}}}

\newcommand{\eps}{\epsilon}
\newcommand{\st}{\qquad\text{s.t.}\qquad}
\newcommand{\AETD}{\textsc{Anytime-E2D}}
\newcommand{\ETD}{\textsc{E2D}}
\newcommand{\UCB}{\textsc{UCB}}
\newcommand{\TSa}{\textsc{TS}}
\newcommand{\ETDp}{\textsc{E2D$^+$}}
\renewcommand{\cH}{\cM}

\usepackage[textsize=tiny]{todonotes}
\setlength{\marginparwidth}{2cm}

\definecolor{darkgreen}{rgb}{0.0, 0.5, 0.0}

\LinesNumbered
\RestyleAlgo{ruled}
\SetKwInOut{Input}{Input}

\title{Regret Minimization via Saddle Point Optimization}

\begin{document}

\maketitle

\begin{abstract}
A long line of works characterizes the sample complexity of regret minimization in sequential decision-making by min-max programs. 
In the corresponding saddle-point game, the min-player optimizes the sampling distribution against an adversarial max-player that chooses confusing models leading to large regret. The most recent instantiation of this idea is the decision-estimation coefficient (DEC), which was shown to provide nearly tight lower and upper bounds on the worst-case expected regret in structured bandits and reinforcement learning. By re-parametrizing the offset DEC with the confidence radius and solving the corresponding min-max program, we derive an anytime variant of the Estimation-To-Decisions algorithm (\AETD). Importantly, the algorithm optimizes the exploration-exploitation trade-off online instead of via the analysis. Our formulation leads to a practical algorithm for finite model classes and linear feedback models. We further point out connections to the information ratio, decoupling coefficient and PAC-DEC, and numerically evaluate the performance of E2D on simple examples. \looseness=-1

\end{abstract}

\section{Introduction}

Regret minimization is a widely studied objective in bandits and reinforcement learning theory \citep{lattimore2020bandit} that has inspired practical algorithms, for example, in noisy zero-order optimization\citep[e.g.,][]{srinivas10gaussian} and deep reinforcement learning \citep[e.g.,][]{osband2016deep}. Cumulative regret measures the online performance of the algorithm by the total loss suffered due to choosing suboptimal decisions. 
Regret is unavoidable to a certain extent as the learner collects information to reduce uncertainty about the environment. 
In other words, a learner will inevitably face
the exploration-exploitation trade-off where it must balance collecting rewards and collecting information. Finding the right balance is the central challenge of sequential decision-making under uncertainty.\looseness=-1

More formally, denote by $\Pi$ a decision space and $\cO$ an observation space. Let $\cH$ be a class of models, where $f = (r_f, M_f) \in \cH$ associated with a reward function $r_f : \Pi \rightarrow \bR$ and observation map $M_f : \Pi \rightarrow \sP(\cO)$, 
where $\sP(\cO)$ is the set of all probability distributions over $\cO$.%
\footnote{To simplify the presentation, we ignore tedious measure-theoretic details in this paper. The reader could either fill out the missing details, or just assume that all sets, unless otherwise stated, are discrete.}
The learner's objective is to collect as much reward as possible in $n$ steps when facing a model $f^* \in \cH$. The learner's prior information is $\cH$ and the associated reward and observation maps, but does not know the true instance $f^* \in \cH$.
The learner constructs a stochastic sequence $\pi_1, \dots, \pi_n$ of decisions taking values in $\Pi$ and adapted to the history of observations $y_t \sim M_{f^*}(\pi_t)$. 
The policy of the learner is the sequence of probability kernels $\mu_{1:n} = (\mu_t)_{t=1}^n$ that are used to take decisions.
The expected regret of a policy $\mu_{1:n}$ and model $f^*$ after $n \in \bN$ steps is 
\begin{align*}
	R_n(\mu_{1:n}, f^*) = \max_{\pi \in \Pi} \EE[\sum_{t=1}^n  r_{f^*}(\pi) - r_{f^*}(\pi_t)]
\end{align*}
The literature studies regret minimization for various objectives, including worst-case and instance-dependent frequentist regret \citep{lattimore2020bandit}, Bayesian regret \citep{russo2014learning} and robust variants \citep{garcelon2020adversarial,kirschner2020distributionally}.
For the frequentist analysis, all prior knowledge is encoded in the model class $\cH$. The worst-case regret of policy $\mu_{1:n}$ on $\cH$ is $\sup_{f \in \cH} R_n(\mu_{1:n}, f)$, and therefore the optimal minimax regret $\inf_\mu \sup_{f \in \cH} R_n(\mu_{1:n}, f)$ only depends on $\cH$ and the horizon $n$. The Bayesian, in addition, assumes access to a prior $\nu \in \sP(\cH)$, which leads to the Bayesian regret $\EE_{f \sim \nu}[R_n(\mu_{1:n}, f)]$. Interestingly, the worst-case frequentist regret and Bayesian regret are dual in the following sense \citep{lattimore2019information}:\footnote{The result by
\citet{lattimore2019information} was only shown for finite action, reward and observation spaces, but can likely be extended to the infinite case under suitable continuity assumptions.} 
\begin{align}
	\inf_{\mu_{1:n}} \sup_{f \in \cH} R_n(\mu_{1:n}, f) = \sup_{\nu \in \sP(\cH)} \inf_{\mu_{1:n}} \EE_{f \sim \nu}[R_n(\mu_{1:n}, f)] \label{eq:freq-bayes-regret}
\end{align}
Unfortunately, directly solving for the minimax policy (or the worst-case prior) is intractable, except in superficially simple problems. Ths is because the optimization is over the exponentially large space of adaptive policies.
However, the relationship in \cref{eq:freq-bayes-regret} has been directly exploited in prior works, for example, to derive non-constructive upper bounds on the worst-case regret via a Bayesian analysis \citep{bubeck2015bandit}. Moreover, it can be seen as inspiration underlying ``optimization-based'' algorithms for regret minimization: The crucial step is to carefully relax the saddle point problem in a way that preserves the statistical complexity, but can be analyzed and computed more easily. This idea manifests in several closely related algorithms, including information-directed sampling \citep{russo2014learning,kirschner2018information}, ExpByOpt \citep{lattimore2020exploration,lattimore2021mirror}, and most recently, the \mbox{Estimation-To-Decisions (E2D)} framework \citep{foster2021statistical,foster2023tight}. These algorithms have in common that they optimize the information trade-off directly, which in structured settings leads to large improvements compared to standard optimistic exploration approaches and Thompson sampling. 
On the other hand, algorithms that directly optimize the information trade-off can be computationally more demanding and, consequently, are often not the first choice of practitioners. This is partly due to the literature primarily focusing on statistical aspects, leaving computational and practical considerations underexplored.






\paragraph{Contributions}
Building on the results by \cite{foster2021statistical}, we introduce the \emph{average-constrained decision-estimation coefficient} ($\acdec_\eps$), a saddle-point objective that characterizes the frequentist worst-case regret in sequential decision-making with structured observations. Compared to the decision-estimation coefficient of \citep{foster2021statistical}, the $\acdec_\eps$ is parametrized via the confidence radius $\eps$, instead of the Lagrangian offset multiplier. This allows optimization of the information trade-off online by the algorithm, instead of via the derived regret upper bound. Moreover, optimizing the $\acdec_{\eps}$ leads to an anytime version of the E2D algorithm (\AETD) with a straightforward analysis. We also point out relations between the $\acdec_{\eps}$, the information ratio \citep{russo2016information}, the decoupling coefficient \citep{zhang2022feel} and a PAC version of the DEC \citep{foster2023tight}. We further detail how to implement the algorithm for finite model classes and linear feedback models, and demonstrate the advantage of the approach by providing improved bounds for linear bandits with side-observations. Lastly, we report the first empirical results of the E2D algorithm on simple examples. \looseness=-1
\subsection{Related Work}

There is a broad literature on regret minimization in bandits \citep{lattimore2020bandit} and reinforcement learning \citep{jin2018q, azar2017minimax, zhou2021nearly, du2021bilinear, zanette2020learning}. 
Arguably the most popular approaches are based on optimism, leading to the widely analysed upper confidence bound (UCB) algorithms \citep{lattimore2020bandit}, and Thompson sampling (TS) \citep{thompson1933likelihood,russo2016information}.

A long line of work approaches regret minimization as a saddle point problem. \citet{degenne2020games} showed that in the structured bandit setting, an algorithm based on solving a saddle point equation achieves asymptotically optimal regret bounds, while explicitly controlling the finite-order terms. \citet{lattimore2020exploration} propose an algorithm based on exponential weights in the partial monitoring setting \citep{rustichini1999minimizing} that finds a distribution for exploration by solving a saddle-point problem.  The saddle-point problem balances the trade-off between the exponential weights distribution and an information or stability term. The same approach was further refined by \cite{lattimore2021mirror}. In stochastic linear bandits, \cite{kirschner2021asymptotically} demonstrated that information-directed sampling can be understood as a primal-dual method solving the asymptotic lower bound, which leads to an algorithm that is both worst-case and asymptotically optimal. The saddle-point approach has been further explored in the PAC setting \citep[e.g.,][]{degenne2020games,degenne2020gamification}.

Our work is closely related to recent work by \citet{foster2021statistical, foster2023tight}. 
They consider \textit{decision making with structured observations} (DMSO), which generalizes the bandit and RL setting.
They introduce a complexity measure, the \textit{offset decision-estimation coefficient} (offset DEC), defined as a min-max game between a learner and an environment, and provide lower bounds in terms of the offset DEC. 
Further, they provide an algorithm, \textit{Estimation-to-Decisions} (E2D) with corresponding worst-case upper bounds in terms of the offset DEC.
Notably, the lower and upper bound nearly match and recover many known results in bandits and RL.

More recently, \citet{foster2023tight} refined the previous bounds by introducing the \emph{constrained} DEC
and a corresponding algorithm E2D$^+$. 

There are various other results related to the DEC and the E2D algorithm. 
\citet{foster2022note} show that the E2D achieves improved bounds in model-free RL when combined with optimistic estimation (as introduced by \cite{zhang2022feel}).
\citet{chen2022unified} introduced two new complexity measures based on the DEC that are necessary and sufficient for reward-free learning and PAC learning.
They also introduced new algorithms based on the E2D algorithm for the above two settings and various other improvements.
\citet{foster2022complexity} have shown that the DEC is necessary and sufficient to obtain low regret for \textit{adversarial} decision-making. 
An asymptotically instance-optimal algorithm for DMSO has been proposed by \citet{dong2022asymptotic}, extending a similar approach for the linear bandit setting \citep{lattimore2017end}. 

The decision-estimation coefficient is also related to the information ratio \citep{russo2014learning} and the decoupling coefficient \citep{zhang2022feel}. The information ratio has been studied under both the Bayesian \citep{russo2014learning} and the frequentist regret \citep{kirschner2018information, kirschner2020linearpm, kirschner2021asymptotically, kirschner2023linear} in various settings including bandits, reinforcement learning, and partial monitoring. The decoupling coefficient was studied for the Thompson sampling algorithm in contextual bandits \citep{zhang2022feel}, and RL \citep{dann2021provably,agarwal2022model}. 

\section{Setting}
We consider the sequential decision-making problem already introduced in the preface. Recall that $\Pi$ is a compact decision space and $\cO$ is an observation space. The model class $\cH$ is a set of tuples $f = (r_f, M_f)$ containing a reward function $r_f : \Pi \rightarrow \bR$ and an observation distribution $M_f : \Pi \rightarrow \sP(\cO)$. 
We define the gap function
\begin{align}
	\Delta(\pi, g) = r_g(\pi_g^*) - r_g(\pi) \, , \nonumber
\end{align}
where $\pi^*_g = \argmax_{\pi \in \Pi} r_g(\pi)$ is an optimal decision for model $g$, chosen arbitrarily if not unique.
A randomized policy is a sequence of kernels $\mu_{1:n} = (\mu_t)_{t=1}^n$ from histories $h_{t-1} = (\pi_1,y_1,\dots,\pi_{t-1},y_{t-1}) \in (\Pi \times \cO)^{t-1}$ to sampling distributions $\sP(\Pi)$. The filtration generated by the history $h_t$ is $\cF_t$.
The learner's decisions $\pi_1, \dots, \pi_n$ are sampled from the policy $\pi_t \sim \mu_t$ and observations $y_t \sim M_{f^*}(\pi_t)$ are generated by an unknown true model $f^* \in \cH$. The expected regret under model $f^*$ is formally defined as follows:
\begin{align}
	R_n(\mu_{1:n}, f^*) = \EE[\sum_{t=1}^n \EE_{\pi_t \sim \mu_t(h_t)}[\Delta(\pi_t, f^*)]] \nonumber
\end{align}
For now, we do not make any assumption about the reward being observed. This provides additional flexibility to model a wide range of scenarios, including for example, duelling and ranking feedback~\citep{YJ09,radlinski08learning,combes15learning,LKLS18,kirschner2021dueling}
 (e.g.~used in reinforcement learning with human feedback, RLHF) or dynamic pricing 
\citep{denBoer15}. 
The setting is more widely known as partial monitoring \cite{rustichini1999minimizing}. 
The special case where the reward is part of the observation distribution is called  \emph{decision-making with structured observations} \cite[DMSO,][]{foster2021statistical}. Earlier work studies the closely related \emph{structured bandit} setting \citep{combes2017minimal}. 

A variety of examples across bandit models and reinforcement learning are discussed in \citep{combes2017minimal,foster2021statistical,foster2023tight,kirschner2023linear}. For the purpose of this paper, we focus on simple cases for which we can provide tractable implementations. Besides the finite setting where $\cM$ can be enumerated, these are the following linearly parametrized feedback models.

\begin{example}[Linear Bandits, \cite{abe1999associative}]\label{ex:linear-bandits}
    The model class is identified with a subset of $\bR^d$ and features $\phi_\pi \in \bR^d$ for each $\pi \in \Pi$. The reward function is $r_f(\pi) = \ip{\phi(\pi), f}$ and the observation distribution is $M_f(\pi) = \cN(\ip{\phi_\pi, f}, 1)$.
\end{example}


The linear bandit setting can be generalized by separating reward and feedback maps, which leads to the \emph{linear partial monitoring} framework \citep{lin2014combinatorial,kirschner2020linearpm}. Here we restrict our attention to the special case of \emph{linear bandits with side-observations} \citep[c.f.~][]{kirschner2023linear}, which, for example, generalizes the classical semi-bandit setting \cite{mannor2011bandits}


\begin{example}[Linear Bandits with Side-Observations] \label{ex:semi-bandits}
    As in the linear bandit setting, we have $\cH\subset \bR^d$, and features $\phi_\pi \in \bR^d$ that define the reward functions $r_f(\pi) = \ip{\phi_\pi, f}$. Observation matrices $M_\pi \in \bR^{m_\pi \times d}$ for each $\pi \in \Pi$ define $m_\pi$-dimensional observation distributions $M_f(\pi) = \cN(M_\pi f, \sigma^2 \eye_{m_\pi})$. In addition, we assume that $\phi_\pi \phi_\pi^\top \preceq M_{\pi}^\top M_\pi$, which is automatically satisfied if $\phi_\pi^\top$ is included in the rows of $M_{\pi}$, i.e. when the reward is part of the observations.
\end{example}

%

\section{Regret Minimization via Saddle-Point Optimization}

The goal of the learner is to choose decisions $\pi \in \Pi$ that achieve a small gap $\Delta(\pi, f^*)$ under the true model $f^* \in \cH$. Since the true model is unknown, the learner has to collect data that provides statistical evidence to reject models $g \neq f^*$ for which the regret $\Delta(\pi, g)$ is large. To quantify the information-regret trade-off, we use a divergence $D(\cdot\|\cdot)$ defined for distributions in $\sP(\cO)$. For a reference model $f$, the information (or divergence) function is defined by:
\begin{align*}
	I_{f}(\pi, g) = \KL(M_g(\pi)\| M_{f}(\pi))\,,
\end{align*}
where $\KL(\cdot \| \cdot)$ is the KL divergence. Intuitively, $I_f(\pi, g)$ is the rate at which the learner collects statistical information to reject $g \in \cH$ when choosing $\pi \in \Pi$ and data is generated under the reference model $f$. Note that $I_f(\pi, f) = 0$ for all $f\in\cH$ and $\pi \in \Pi$. As we will see shortly, the regret-information trade-off can be written precisely as a combination of the gap function, $\Delta$, and the information function, $I_f$. We remark in passing that other choices such as the Hellinger distance are also possible, and the KL divergence is mostly for concreteness and practical reasons.

To simplify the notation and emphasize the bilinear nature of the saddle point problem that we study, we will view $\Delta,I_f \in \bR_{+}^{\Pi \times \cH}$ as $|\Pi|\times |\cH|$ matrices (by fixing a canonical ordering on $\Pi$ and $\cH$). 
For vectors $\mu \in \bR^\Pi$ and $\nu \in \bR^\cH$, we will frequently write bilinear forms $\mu \Delta_f \nu$ and $\mu I_f\nu$. This also means that by convention, $\mu$  will always denote a row vector, while $\nu$ will always denote a column vector. 
The standard basis for $\bR^\Pi$ and $\bR^\cH$ is $(e_\pi)_{\pi \in \Pi}$ and $(e_g)_{g \in \cH}$.

\subsection{The Decision-Estimation Coefficient}
To motivate our approach, we recall the \emph{decision-estimation coefficient} (DEC) introduced by \cite{foster2021statistical,foster2023tight}, before introducing the main quantity of interest, the \emph{average-constrained DEC}. First, the \emph{offset decision-estimation coefficient} (without localization) \citep{foster2021statistical} is
\begin{align}
	\odec_\lambda(f) = \min_{\mu \in \sP(\Pi)} \max_{g \in \cH} \mu \Delta e_g - \lambda \mu I_{f}e_g \nonumber
\end{align}
The tuning parameter $\lambda>0$ controls the weight of the information matrix relative to the gaps: 
Viewing the above as a two-player zero-sum game, we see that increasing $\lambda$ forces the max-player to avoid models that differ significantly from $f$ under the min-player's sampling distribution. 
The advantage of this formulation is that the information term $\mu I_f e_g$ can be telescoped in the analysis, which directly leads to regret bounds in terms of the estimation error (introduced below in \cref{eq:est}). 
The disadvantage of the $\lambda$-parametrization is that the trade-off parameter is chosen by optimizing the final regret upper bound. This is inconvenient because the optimal choice requires knowledge of the horizon and a bound on $\max_{f \in \cH} \rdec^o_{\lambda}(f)$. Moreover, any choice informed by the upper bound may be conservative, leading to sub-optimal performance.

The \emph{constrained decision-estimation coefficient} \citep{foster2023tight} is
\begin{align}
	\cdec_\eps(f) = \min_{\mu \in \sP(\Pi)} \max_{g \in \cH} \mu \Delta e_g \st \mu I_{f}e_g \leq \epsilon^2 \label{eq:dec-c}
\end{align}
In this formulation, the max player is restricted to choose models $g$ that differ from $f$ at most by $\eps^2$ in terms of the observed divergence under the min-player's sampling distribution. 
Note that because $e_\pi I_f e_f = 0$ for all $e_\pi \in \Pi$, there always exists a feasible solution. 
For horizon $n$, the radius can be set to $\eps^2 \approx \frac{\beta_{\cH}}{n}$, where $\beta_{\cH}$ is a model estimation complexity parameter, thereby essentially eliminating the trade-off parameter from the algorithm. However, because of the hard constraint, strong duality of the Lagrangian saddle point problem (for fixed $\mu$) fails, and consequently, telescoping the information gain in the analysis is no longer easily possible (or at least, with the existing analysis). To achieve sample complexity  $\rdec^c_{\eps}(f)$, \citet{foster2023tight} propose a sophisticated scheme that combines phased exploration with a refinement procedure (\ETDp).

As the main quantity of interest in the current work, we now introduce the \emph{average-constrained decision-estimation coefficient}, defined as follows:
\begin{align}
	\acdec_\eps(f) = \min_{\mu \in \sP(\Pi)} \max_{\nu \in \sP(\cH)} \mu \Delta \nu \st \mu I_{f} \nu \leq \epsilon^2  \label{eq:d*}
\end{align}
Similar to the $\cdec_\eps$, the parameterization of the $\acdec_\eps$ is via the confidence radius $\eps^2$, making the choice of the hyperparameter straightforward in many cases (more details in \cref{ss:e2d}).  
By convexifying the domain  $\sP(\cH)$ of the max-player, we recover strong duality of the Lagrangian (for fixed $\mu$). 
Thereby, the formulation inherits the ease of choosing the $\eps$-parameter from the $\cdec_\eps$, while, at the same time, admitting a telescoping argument in the analysis and a much simpler algorithm.

Specifically, Sion's theorem implies three equivalent Lagrangian representations for \cref{eq:d*}:
\begin{align}
	\acdec_\eps(f) &= \min_{\mu \in \sP(\Pi)} \max_{\nu \in \sP(\cH)} \min_{\lambda \geq 0} \mu \Delta \nu - \lambda (\mu I_{f} \nu - \epsilon^2) \label{eq:L1}\\
	&= \min_{\lambda \geq 0, \mu \in \sP(\Pi)} \max_{\nu \in \sP(\cH)} \mu \Delta \nu - \lambda (\mu I_{f} \nu - \epsilon^2)\label{eq:L2}\\
	&= \min_{\lambda \geq 0} \max_{\nu \in \sP(\cH)} \min_{\mu \in \sP(\Pi)}  \mu \Delta \nu - \lambda (\mu I_{f} \nu - \epsilon^2)  \label{eq:L3}
\end{align}
When fixing the outer problem, strong duality holds for the inner saddle-point problem in each line, however, the joint program in \cref{eq:L2} is not convex-concave. An immediate consequence of relaxing the domain of the max player and \cref{eq:L2} is that 
\begin{align}
 \rdec^c_{\eps}(f) \leq \acdec_{\eps}(f) =  \min_{\lambda \geq 0}  \{\rdec^o_\lambda(f) + \lambda\epsilon^2\}  \label{eq:different-decs}
\end{align}
The $\acdec_\eps$ can therefore be understood as setting the $\lambda$ parameter of the $\odec_\lambda$ optimally for the given confidence radius $\eps^2$. On the other hand, the cost paid for relaxing the program is that there exist model classes $\cH$ where the inequality in \cref{eq:different-decs} is strict, and $\acdec_{\eps}$ does not lead to a tight characterization of the regret \cite[Proposition 4.4]{foster2023tight}. The remedy is that under a stronger regularity condition and localization, the two notions are essentially equivalent \cite[Proposition 4.8]{foster2023tight}. \looseness=-1

\subsection{Anytime Estimation-To-Decisions (Anytime-E2D)}\label{ss:e2d}

Estimations-To-Decisions (E2D) is an algorithmic framework that directly leverages the decision-estimation coefficient for choosing a decision in each round. The key idea is to compute a sampling distribution $\mu_t \in \sP(\Pi)$ attaining the minimal DEC for an estimate $\hat f_t$ of the underlying model, and then define the policy to sample $\pi_t \sim \mu_t$. 
The E2D approach, using the $\acdec_\eps$ formulation, is summarized in \cref{alg:e2d}. To compute the estimate $\hat f_t$, the E2D algorithm takes an abstract estimation oracle $\EST$ as input, that, given the collected data, returns $\hat f_t \in \cM$. The final guarantee depends on the \emph{estimation error} (or estimation regret), defined as the sum over divergences of the observation distributions under the estimate $\hat f_t$ and the true model $f^*$:
\begin{align}
	\Est_n = \EE[\sum_{t=1}^n \mu_t I_{\hat f_t} e_{f^*}] \label{eq:est}
\end{align}
Intuitively, the estimation error is well-behaved if $\hat f_t \approx f^*$, since $\mu_t I_{f^*} e_{f^*} = 0$. \Cref{eq:est} is closely related to the \emph{total information gain} used in the literature on information-directed sampling \citep{russo2014learning} and kernel bandits \citep{srinivas10gaussian}.

To bound the estimation error, \cite{foster2021statistical} rely on \emph{online density estimation} (also, \emph{online regression} or \emph{online aggregation}) \citep[Chapter 9]{cesa2006prediction}. For finite $\cM$, the default approach is the \emph{exponential weights algorithm} (EWA), which we provide for reference in \cref{app:ewa}. When using this algorithm, the estimation error always satisfies $\Est_n \leq \log(|\cH|)$, see \cite[Proposition 3.1]{cesa2006prediction}. While these bounds extend to continuous model classes via standard covering arguments, the resulting algorithm is often not tractable without additional assumptions. For linear feedback models (\cref{ex:linear-bandits,ex:semi-bandits}), one can rely on the more familiar ridge regression estimator, which, we show, achieves bounded estimation regret $\Est_n \leq \cO(d \log(n))$. For further discussion, see \cref{app:ridge}.

\begin{algorithm}[t]
	\DontPrintSemicolon
	\SetAlgoVlined
	\SetAlgoNoLine
	\SetAlgoNoEnd

	\caption{\AETD} \label{alg:e2d}
	\Input{ Hypothesis class $\cH$, estimation oracle $\EST$, sequence $\eps_t \geq 0$, data $\cD_0 = \emptyset$}
	\For{$t=1,2,3, \dots$}{
		Estimate $\hat f_t = \EST(\cD_{t-1})$\;
		Compute gap and information matrices, $\Delta$ and $I_{\hat f_t} \in \bR^{\Pi \times \cH}$\;
		$\mu_t = \argmin_{\mu \in \sP(\Pi)} \max_{\nu \in \sP(\cH)} \{ \mu \Delta \nu : \mu I_{\hat f_t} \nu \leq \eps_t^2$\}\;
		Sample $\pi_t \sim \mu_t$ and observe $y_t \sim M_{f^*}(\pi_t)$\;
		Append data $\cD_t = \cD_{t-1} \cup \{(\pi_t, y_t)\}$\;
	}
\end{algorithm}

With this in mind, we state our main result.
\begin{theorem}\label{thm:worst-case}
	Let $\lambda_t \geq 0$ be any sequence adapted to the filtration $\cF_t$.
	Then the regret  of \AETD~(\cref{alg:e2d})  with input sequence $\lambda_t$ satisfies for all $n \geq 1$:
	\begin{align*}
		R_n &\leq  \esssup_{t \in [n]} \left\{\frac{ \acdec_{\eps_t, \lambda_t}(\hat f_t)}{\eps_t^2}\right\} \left(\sum_{t=1}^n \eps_t^2 + \Est_n\right)  
	\end{align*}
	where we defined $\acdec_{\eps,\lambda}(f) = \min_{\mu \in \sP(\Pi)} \max_{\nu \in \sP(\cM)} \mu \Delta \nu - \lambda (\mu I_f \nu - \eps^2)$.
\end{theorem}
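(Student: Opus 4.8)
The plan is to expand the regret $R_n = \EE[\sum_{t=1}^n \mu_t \Delta e_{f^*}]$ and control each term $\mu_t \Delta e_{f^*}$ by the Lagrangian value $\acdec_{\eps_t,\lambda_t}(\hat f_t)$ plus a penalty proportional to the information gain, then telescope the information terms against the estimation error. First I would observe that $\mu_t$ is, by construction in \cref{alg:e2d}, a minimizer of the constrained program defining $\acdec_{\eps_t}(\hat f_t)$, but more usefully, via the Lagrangian representation \cref{eq:L2}--\cref{eq:L3}, for the adapted multiplier sequence $\lambda_t$ we have for \emph{every} $\nu \in \sP(\cH)$, and in particular for $\nu = e_{f^*}$,
\begin{align*}
	\mu_t \Delta e_{f^*} - \lambda_t\left(\mu_t I_{\hat f_t} e_{f^*} - \eps_t^2\right) \leq \max_{\nu \in \sP(\cH)} \left\{\mu_t \Delta \nu - \lambda_t\left(\mu_t I_{\hat f_t}\nu - \eps_t^2\right)\right\}.
\end{align*}
The right-hand side is at most $\acdec_{\eps_t,\lambda_t}(\hat f_t)$ provided $\mu_t$ is (near-)optimal for the inner min; here we need that the $\mu_t$ returned by the constrained oracle is also optimal for the Lagrangian with multiplier $\lambda_t$ — this is where strong duality for the inner saddle-point (guaranteed by Sion's theorem after convexifying $\sP(\cH)$, as noted around \cref{eq:L1}) is invoked, together with the fact that $\lambda_t$ is chosen as the optimal dual variable for radius $\eps_t$ (so that the constrained minimizer and the Lagrangian minimizer coincide). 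Rearranging gives the per-round bound
\begin{align*}
	\mu_t \Delta e_{f^*} \leq \acdec_{\eps_t,\lambda_t}(\hat f_t) + \lambda_t\left(\mu_t I_{\hat f_t} e_{f^*} - \eps_t^2\right) \leq \acdec_{\eps_t,\lambda_t}(\hat f_t) + \lambda_t\,\mu_t I_{\hat f_t} e_{f^*},
\end{align*}
using $\lambda_t \eps_t^2 \geq 0$; alternatively one keeps the $-\lambda_t\eps_t^2$ and absorbs it differently, but dropping it is the cleaner route.

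Next I would normalize: write $\acdec_{\eps_t,\lambda_t}(\hat f_t) = \frac{\acdec_{\eps_t,\lambda_t}(\hat f_t)}{\eps_t^2}\cdot\eps_t^2$ and similarly $\lambda_t\,\mu_t I_{\hat f_t}e_{f^*} \leq \frac{\acdec_{\eps_t,\lambda_t}(\hat f_t)}{\eps_t^2}\,\mu_t I_{\hat f_t}e_{f^*}$, where the latter inequality uses $\lambda_t \eps_t^2 \leq \acdec_{\eps_t,\lambda_t}(\hat f_t)$ — this should follow because $\acdec_{\eps_t,\lambda_t}(\hat f_t) = \min_\mu\max_\nu\{\mu\Delta\nu - \lambda_t\mu I_{\hat f_t}\nu\} + \lambda_t\eps_t^2 \geq \lambda_t\eps_t^2$, since the min-max term is nonnegative (take $\nu = e_{\hat f_t}$ so $I_{\hat f_t}e_{\hat f_t} = 0$ and $\Delta \geq 0$, giving $\max_\nu \geq 0$ for every $\mu$, hence $\min_\mu\max_\nu \geq 0$). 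Factoring out $c_t := \acdec_{\eps_t,\lambda_t}(\hat f_t)/\eps_t^2$ yields $\mu_t \Delta e_{f^*} \leq c_t\left(\eps_t^2 + \mu_t I_{\hat f_t}e_{f^*}\right)$. Bounding $c_t \leq \esssup_{t\in[n]} c_t$ pointwise, summing over $t$, and taking expectations gives $R_n \leq \esssup_{t\in[n]}\{c_t\}\left(\sum_{t=1}^n \eps_t^2 + \EE[\sum_{t=1}^n \mu_t I_{\hat f_t}e_{f^*}]\right)$, and the last expectation is exactly $\Est_n$ by definition \cref{eq:est}. A small measurability point: since $\lambda_t,\eps_t,\hat f_t$ are all $\cF_{t-1}$-measurable (hence $\cF_t$-adapted), $c_t$ is predictable and the $\esssup$ and the interchange of sum and expectation are justified.

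The main obstacle I anticipate is the first step — rigorously certifying that the oracle's constrained minimizer $\mu_t$ also certifies the Lagrangian upper bound at the \emph{given} $\lambda_t$, for an arbitrary adapted sequence $\lambda_t$ rather than the dual-optimal one. If $\lambda_t$ is not the optimal dual multiplier, then $\max_\nu\{\mu_t\Delta\nu - \lambda_t(\mu_t I_{\hat f_t}\nu - \eps_t^2)\}$ need not equal $\acdec_{\eps_t,\lambda_t}(\hat f_t)$; one only gets $\geq$. The resolution is presumably that the statement's $\acdec_{\eps_t,\lambda_t}$ with the min over $\mu$ is an upper bound on what the algorithm actually incurs only when $\mu_t$ is chosen to be Lagrangian-optimal, so either the algorithm is implicitly assumed to return such a $\mu_t$ (strong duality makes the constrained and Lagrangian optima agree at the dual-optimal $\lambda$, and for other $\lambda_t$ the bound is simply looser but still valid because $\mu_t\Delta e_{f^*} \leq \max_\nu(\cdots) $ and then we do NOT replace the max by $\acdec$ but instead directly argue the max is controlled). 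I would check carefully whether the theorem intends $\mu_t$ to solve the $\lambda_t$-Lagrangian (line \cref{eq:L2} inner problem) rather than the $\eps_t$-constrained one; if so, the per-round inequality $\mu_t\Delta e_{f^*} - \lambda_t\mu_t I_{\hat f_t}e_{f^*} \leq \acdec_{\eps_t,\lambda_t}(\hat f_t) - \lambda_t\eps_t^2$ is immediate from optimality of $\mu_t$ and $e_{f^*} \in \sP(\cH)$, and the rest of the argument goes through verbatim. Everything else is bookkeeping.
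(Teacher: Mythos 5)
Your proposal is correct and follows essentially the same route as the paper's proof: add and subtract the Lagrangian penalty $\lambda_t(\mu_t I_{\hat f_t} e_{f^*} - \eps_t^2)$, pass to the max over $\nu$ and invoke optimality of $\mu_t$ for the $\lambda_t$-Lagrangian to obtain $\acdec_{\eps_t,\lambda_t}(\hat f_t)$, establish $\lambda_t\eps_t^2 \leq \acdec_{\eps_t,\lambda_t}(\hat f_t)$ via nonnegativity of the offset DEC (taking $\nu = e_{\hat f_t}$), and telescope the information terms into $\Est_n$. The subtlety you flag --- that $\mu_t$ must minimize the $\lambda_t$-Lagrangian rather than the $\eps_t$-constrained program --- is resolved exactly as you anticipate: the theorem takes $\lambda_t$ as the algorithm's input, and the paper's proof asserts the key step holds ``by our choice of $\lambda_t$ and $\mu_t$'', with the constrained variant then handled separately in \cref{cor:worst-case}.
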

As an immediate corollary, we obtain a regret bound for \cref{alg:e2d} where the sampling distribution $\mu_t$ is chosen to optimize $\acdec_{\eps_t}$ for any sequence $\eps_t$. 
\begin{corollary}\label{cor:worst-case}
	The regret of \AETD~(\cref{alg:e2d})  with input $\eps_t \geq 0$ satisfies for all $n \geq 1$:
	\begin{align*}
	R_n &\leq  \max_{t \in [n], f \in \cH} \left\{\frac{ \acdec_{\eps_t}(f)}{\eps_t^2}\right\} \left(\sum_{t=1}^n \eps_t^2 + \Est_n\right) 
	\end{align*}
\end{corollary}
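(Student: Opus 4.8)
\quad The proof is a telescoping argument over the per-round regret, so I would start from the decomposition $R_n = \sum_{t=1}^{n}\EE[\mu_t \Delta e_{f^*}]$, which holds since $\EE_{\pi_t\sim\mu_t}[\Delta(\pi_t,f^*)] = \mu_t\Delta e_{f^*}$ together with the tower rule. The plan is to establish, for each round $t$ and every realization of $(\hat f_t,\eps_t,\lambda_t)$, the pointwise bound $\mu_t\Delta e_{f^*}\le c_t\,(\eps_t^2+\mu_t I_{\hat f_t}e_{f^*})$ with $c_t:=\acdec_{\eps_t,\lambda_t}(\hat f_t)/\eps_t^2$, then sum over $t$, replace $c_t$ by its essential supremum, take expectations, and identify $\EE[\sum_t \mu_t I_{\hat f_t}e_{f^*}]=\Est_n$ via \cref{eq:est}.

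For the per-round bound, use that $\mu_t$ attains the minimum in $\acdec_{\eps_t,\lambda_t}(\hat f_t)=\min_{\mu\in\sP(\Pi)}\max_{\nu\in\sP(\cH)}\mu\Delta\nu-\lambda_t(\mu I_{\hat f_t}\nu-\eps_t^2)$, so that $\max_{\nu\in\sP(\cH)}[\mu_t\Delta\nu-\lambda_t(\mu_t I_{\hat f_t}\nu-\eps_t^2)]=\acdec_{\eps_t,\lambda_t}(\hat f_t)$. Evaluating the left-hand side at the single point $\nu=e_{f^*}$ and rearranging gives $\mu_t\Delta e_{f^*}\le\acdec_{\eps_t,\lambda_t}(\hat f_t)+\lambda_t\,\mu_t I_{\hat f_t}e_{f^*}-\lambda_t\eps_t^2$. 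The key observation is that $\acdec_{\eps_t,\lambda_t}(\hat f_t)\ge\lambda_t\eps_t^2$: evaluating the same inner objective at $\nu=e_{\hat f_t}$ makes the information term vanish (since $I_{\hat f_t}e_{\hat f_t}=0$) and leaves $\mu_t\Delta e_{\hat f_t}+\lambda_t\eps_t^2\ge\lambda_t\eps_t^2$, because $\Delta$ has nonnegative entries. Hence $c_t\ge\lambda_t\ge 0$, and substituting $\acdec_{\eps_t,\lambda_t}(\hat f_t)=c_t\eps_t^2$ and collecting terms yields $\mu_t\Delta e_{f^*}\le(c_t-\lambda_t)\eps_t^2+\lambda_t\,\mu_t I_{\hat f_t}e_{f^*}\le c_t\eps_t^2+c_t\,\mu_t I_{\hat f_t}e_{f^*}$, where the last inequality uses $0\le\lambda_t\le c_t$ and $\mu_t I_{\hat f_t}e_{f^*}\ge 0$.

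It then remains to assemble. Since the per-round inequality holds pointwise and $C:=\esssup_{t\in[n]}c_t$ dominates every $c_t$ almost surely, summing over $t\in[n]$ gives $\sum_t\mu_t\Delta e_{f^*}\le C(\sum_t\eps_t^2+\sum_t\mu_t I_{\hat f_t}e_{f^*})$ almost surely; taking expectations, using that $\eps_t$ is deterministic and $\EE[\sum_t\mu_t I_{\hat f_t}e_{f^*}]=\Est_n$, gives $R_n\le C(\sum_t\eps_t^2+\Est_n)$, which is \cref{thm:worst-case}. For \cref{cor:worst-case}, where $\mu_t$ instead solves the constrained program $\acdec_{\eps_t}(\hat f_t)$ of \cref{alg:e2d}, I would invoke Sion's theorem — the strong duality underlying \cref{eq:L1,eq:L2,eq:L3} — to choose for each $t$ an optimal multiplier $\lambda_t^\star=\lambda_t^\star(\hat f_t)$ for the inner constrained problem at the fixed $\mu_t$; then $\mu_t$ is also a minimizer of the penalized objective with $\lambda_t=\lambda_t^\star$, and $\acdec_{\eps_t,\lambda_t^\star}(\hat f_t)=\acdec_{\eps_t}(\hat f_t)$, so \cref{thm:worst-case} applies with $c_t=\acdec_{\eps_t}(\hat f_t)/\eps_t^2\le\max_{f\in\cH}\acdec_{\eps_t}(f)/\eps_t^2$, which is deterministic and hence passes outside the $\esssup$ and the expectation unchanged. (Alternatively, duality can be avoided in the corollary: if $e_{f^*}$ violates the constraint, mix it with $e_{\hat f_t}$ — which carries zero information and nonnegative gap — to reach the constraint boundary, and conclude directly with $c_t=\acdec_{\eps_t}(\hat f_t)/\eps_t^2$.)

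The step I expect to be the crux is the renormalization by $\eps_t^2$ in the second paragraph: it is the bound $\acdec_{\eps_t,\lambda_t}(\hat f_t)\ge\lambda_t\eps_t^2$ that lets both the offset $-\lambda_t\eps_t^2$ and the coefficient $\lambda_t$ multiplying $\mu_t I_{\hat f_t}e_{f^*}$ be folded into a single multiplicative factor $c_t$; without this one cannot extract one constant from the sum and the information terms would not cleanly reassemble into $\Est_n$. Everything else — the decomposition, swapping the finite sum and expectation, recognizing $\Est_n$, and the minor constrained-versus-penalized bookkeeping in the corollary — is routine.
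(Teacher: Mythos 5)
Your proposal is correct and follows essentially the same route as the paper: the same per-round decomposition, the same evaluation of the penalized saddle-point objective at $\nu = e_{f^*}$, the same key fact $\lambda_t \eps_t^2 \leq \acdec_{\eps_t,\lambda_t}(\hat f_t)$ obtained by testing $\nu = e_{\hat f_t}$, and the same passage from the theorem to the corollary via the strong duality of the inner problem guaranteed by Sion's theorem. Your parenthetical duality-free alternative (mixing $e_{f^*}$ with $e_{\hat f_t}$ to restore feasibility) is a valid and slightly more elementary way to get the corollary directly, but it does not change the substance of the argument.
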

Importantly, the regret of \cref{alg:e2d} is directly controlled by the worst-case DEC, $\max_{f \in \cH} \acdec_\eps(f)$, and the estimation error $\Est_n$. It remains to set $\eps_t^2$ (respectively $\lambda_t$) appropriately. For a fixed horizon $n$, we let $\eps_t^2 = \frac{\Est_n}{n}$. With the reasonable assumption that $\max_{f \in \cH} \left\{\eps^{-2}  \acdec_\eps(f) \right\}$ is non-decreasing in $\eps$, \cref{cor:worst-case} reads \looseness=-1
\begin{align}
	R_n \leq 2 n \max_{f \in \cH} \left\{\acdec_{\sqrt{\Est_n/n}}(f)\right\} \,.\label{eq:thm-fixed-horizon}
\end{align}
This almost matches the lower bound $R_n \geq \Omega(n \cdec_{1/\sqrt{n}}(\cF))$\footnote{Here, $\cdec_{\eps}(\cF) = \max_{f \in \co(\cH)} \min_{\mu \in \sP(\Pi)} \max_{g \in \cH \cup \{f\}} \{\mu \Delta \nu : \mu I_f e_g \leq \eps^2\}$.} \cite[Theorem 2.2]{foster2023tight}, up to the estimation error and the beforehand mentioned gap between $\cdec_\eps$ and $\acdec_{\eps}$. 

	To get an anytime algorithm with essentially the same scaling as in \cref{eq:thm-fixed-horizon}, we set $\eps_t^2 = \log(|\cM|)/t$ for finite model classes, and $\eps_t^2 = \frac{\beta_\cH}{t}$ if $\Est_t \leq \beta_{\cH} \log(t)$ for $\beta_{\cH} > 0$.
For linear bandits, $\acdec_\eps \leq \eps \sqrt{d}$ (see \cref{ss:certifying}), and $\Est_n \leq d \log(n)$. Choosing $\eps_t^2 = d/t$ recovers the optimal regret bound $R_n \leq \tilde \cO(d \sqrt{n})$ \citep{lattimore2020bandit}.
 Alternatively, one can also choose $\lambda_t$ by minimizing an upper bound on $\max_{t \in [n], f \in \cH} \left\{{ \acdec_{\eps_t, \lambda_t}(f)}/{\eps_t^2}\right\}$. For example, in linear bandits, $\acdec_{\eps_t, \lambda} \leq \frac{d}{4\lambda} + \lambda \eps_t^2$ (see \cref{tab:regret-bounds}); hence, for $\eps_t^2 = d/t$, we can set $\lambda_t = t/4$. 
Further discussion and refined upper bound for linear feedback models are in \cref{ss:certifying}.

\begin{proof}[Proof of \cref{thm:worst-case}]
	Let $\mu_t^*$ and $\nu_t^*$ be a saddle-point solution to the offset dec,
	$$\dec^o_{\lambda_t}(\hat f_t) = \min_{\mu \in \sP(\Pi)} \max_{\nu \in \sP(\cM)} \mu \Delta \nu - \lambda_t \mu I_{\hat f_t} \nu$$ 
	Note that $\mu_t^* \Delta \nu_t^* - \lambda_t \mu_t^* I_f \nu_t^* \geq \mu_t^* \Delta e_f - \lambda_t \mu_t^* I_f e_f \geq 0$, which implies that $\lambda_t \eps_t^2 \leq \acdec_{\eps_t, \lambda_t}$. 
	Next,
	\begin{align}
		R_n =  \EE[\sum_{t=1}^n \mu_t \Delta e_{f^*}] &=  \sum_{t=1}^n \EE[\mu_t \Delta e_{f^*} - \lambda_t (\mu_t I_{\hat f_t} e_{f^*} - \eps_t^2) + \lambda_t (\mu_t I_{\hat f_t} e_{f^*} - \eps_t^2)]  \nonumber\\
		&\leq \sum_{t=1}^n \EE[\max_{g \in \cH} \mu_t \Delta_{\hat f_t} e_{g} - \lambda_t (\mu_t I_{\hat f_t} e_{g} - \eps_t^2) + \lambda_t (\mu_t I_{\hat f_t} e_{f^*} - \eps_t^2)]  \nonumber \\
		&= \sum_{t=1}^n  \EE[ \min_{\mu \in \sP(\Pi)} \max_{\nu \in \sP(\cH)}\mu \Delta \nu  - \lambda_t (\mu I_{\hat f_t} \nu - \eps_t^2)  + \lambda_t (\mu_t I_{\hat f_t} e_{f^*} - \eps_t^2)] \nonumber 
	\end{align}
	So far, we only introduced the saddle point problem by maximizing over $f^*$. The last equality is by our choice of $\lambda_t$ and $\mu_t$, and noting that $\nu \in \sP(\cH)$ can always be realized as a Dirac. Continuing, 
	\begin{align*}
		R_n &\leq \sum_{t=1}^n  \EE[\acdec_{\eps_t, \lambda_t}(\hat f_t) + \lambda_t (\mu_t I_{\hat f_t} e_{f^*} - \eps_t^2)] \\
		&\stackrel{(i)}{\leq} \sum_{t=1}^n \EE[\acdec_{\eps_t, \lambda_t}(\hat f_t) +   \frac{1}{\eps_t^2} \acdec_{\eps_t, \lambda_t}(\hat f_t) \mu_t I_{\hat f_t} e_{f^*}]  \\
		&\stackrel{(ii)}{\leq} \esssup_{t \in [n]}  \max_{f \in \cH} \left\{\frac{1}{\eps_t^2} \acdec_{\eps_t, \lambda_t}(f) \right\} \sum_{t=1}^n \big(\eps_t^2 +  \EE[\mu_t I_{\hat f_t} e_{f^*}]\big)  
	\end{align*}
	We first drop the negative term in $(i)$ and use the beforehand stated fact that $\lambda_t \eps_t^2 \leq \acdec_{\eps_t, \lambda_t}(\hat f_t)$. The last step, $(ii)$, is taking the maximum out of the sum.
\end{proof}

\begin{table}[t]
	\centering
	\def\arraystretch{1.3}%
	\begin{tabular}{|c|c|c|}
		\hline
		Setting & $\odec_\gamma$ & $\acdec_\eps$ \\ 
		\hline \hline
		Multi-Armed Bandits & $|\Pi|/\gamma$ & $2 \eps \sqrt{|\Pi|}$\\
		\hline
		Linear Bandits & $d/4\gamma$ & $\eps \sqrt{d}$\\
		\hline
		Lipschitz Bandits & $2\gamma^{-\frac{1}{d+1}}$ & $2^{\frac{d+1}{d+2}}\eps^{\frac{2}{d+2}}$ \\
		\hline
		Convex Bandits & $\tilde O (d^4/\gamma)$ & $\tilde O (\eps d^2)$\\
		\hline
	\end{tabular}
	\caption{Comparison of $\odec_\gamma$ and $\acdec_\eps$ for different settings. Bounds between $\odec_\gamma$ and $\acdec_\eps$ can be converted  using \cref{eq:different-decs}. 
	Refined bounds for linear bandits with side-observations are in \cref{lem:dec-bounds-linear}.
	}
	\label{tab:regret-bounds}
	\vspace{-10pt}
\end{table}

\subsection{Certifying Upper Bounds}\label{ss:certifying}

As shown by \cref{cor:worst-case}, the regret of \cref{alg:e2d} scales directly with the $\acdec_\eps$. 
For analysis purposes, it is however useful to compute upper bounds on the $\acdec_\eps$ to verify the scaling w.r.t.~parameters of interest. 
Via the equivalence \cref{eq:different-decs}, bounds on the $\odec_\lambda$ directly translate to the $\acdec_\eps$ (see \cref{tab:regret-bounds}). 
For a detailed discussion of upper bounds in various models, we refer to \cite{foster2021statistical}.
Below, we highlight three connections that are directly facilitated by the $\acdec_\eps$.

To this end, we first introduce a variant of the $\acdec_\eps$ where the gap function depends on $f$:
\begin{align}
	\acfdec_\eps(f) = \min_{\mu \in \sP(\Pi)} \max_{\nu \in \sP(\cH)} \mu \Delta_f \nu \st \mu I_{f} \nu \leq \epsilon^2 \,, \label{eq:d*-Delta_f}
\end{align}
where $\Delta_f(\pi, g) = r_g(\pi_g^*) - r_f(\pi)$.
We remark that for distributions $\nu \in \sP(\cH)$ and $\mu \in \sP(\Pi)$, the gap $\Delta_f$ can be decoupled, $\mu \Delta_f \nu = \delta_f \nu + \mu \Delta_f e_f$, where we defined $\delta_f(g) = r_g(\pi_g^*) - r_f(\pi_f^*)$. 
The following assumption implies that the observations for a decision $\pi$ are at least as informative as observing the rewards.

%

\begin{assumption}[Reward Data Processing]\label{asm:reward-data-processing}
	The rewards and information matrices are related via the following data-processing inequality that holds for any $\mu \in \sP(\Pi)$:
	\begin{align}
		|\EE_{\pi \sim \mu}[r_f(\pi) - r_g(\pi)]| \leq \sqrt{\EE_{\pi \sim \mu}[D(M_f(\pi)\|M_g(\pi))]} \nonumber
	\end{align}
\end{assumption}

The next lemma shows that under \cref{asm:reward-data-processing}, $\acdec_\eps(f)$ and $\acfdec_\eps(f)$ are essentially equivalent, at least for the typical worst-case bounds where $\max_{f \in \cH} \acdec_{\eps}(f) \geq \Omega(\eps)$. 
\begin{lemma}\label{lem:dec-Delta_f-comparison}
	If \cref{asm:reward-data-processing} holds, then
	\begin{align*}
		\acfdec_{\eps}(f) - \eps \leq  \acdec_{\eps}(f) \leq \acfdec_{\eps}(f) + \eps 
	\end{align*}
\end{lemma}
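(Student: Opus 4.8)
The plan is to reduce the lemma to a single scalar estimate: the two objectives $\mu\Delta\nu$ and $\mu\Delta_f\nu$ differ only through the reward term, since $\Delta(\pi,g) - \Delta_f(\pi,g) = \big(r_g(\pi_g^*) - r_g(\pi)\big) - \big(r_g(\pi_g^*) - r_f(\pi)\big) = r_f(\pi) - r_g(\pi)$. Hence for every $\mu \in \sP(\Pi)$ and $\nu \in \sP(\cH)$,
\begin{align*}
	\mu\Delta\nu - \mu\Delta_f\nu = \EE_{\pi\sim\mu,\, g\sim\nu}[r_f(\pi) - r_g(\pi)].
\end{align*}
If I can show that this difference has absolute value at most $\eps$ whenever $\mu I_f\nu \le \eps^2$, the lemma follows immediately by comparing the two constrained min--max programs over their common feasible region (which is never empty, as it always contains the Dirac $e_f$ with $\mu I_f e_f = 0$).

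To get the scalar bound I would invoke \cref{asm:reward-data-processing} one model at a time and then average. For each fixed $g$, the assumption gives $\big|\EE_{\pi\sim\mu}[r_f(\pi) - r_g(\pi)]\big| \le \sqrt{\EE_{\pi\sim\mu}[D(M_f(\pi)\|M_g(\pi))]}$. Taking $\EE_{g\sim\nu}$ of the left side, bounding it by $\EE_{g\sim\nu}$ of the right side, and pulling the expectation inside the square root by Jensen's inequality (concavity of $\sqrt{\cdot}$) yields
\begin{align*}
	\Big|\EE_{\pi\sim\mu,\, g\sim\nu}[r_f(\pi) - r_g(\pi)]\Big| \le \sqrt{\EE_{\pi\sim\mu,\, g\sim\nu}\big[D(M_f(\pi)\|M_g(\pi))\big]} = \sqrt{\mu I_f \nu} \le \eps,
\end{align*}
where the middle identity uses $I_f(\pi,g) = \KL(M_g(\pi)\|M_f(\pi))$ together with the matrix/bilinear-form convention $\mu I_f\nu = \EE_{\pi\sim\mu,\, g\sim\nu}[I_f(\pi,g)]$. (Strictly speaking this matches the divergence appearing in \cref{asm:reward-data-processing} up to the order of its two arguments; the orientation is immaterial for symmetric divergences and, in particular, for the Gaussian feedback models of \cref{ex:linear-bandits,ex:semi-bandits}, and \cref{asm:reward-data-processing} is meant to be read with $D$ oriented as in the definition of $I_f$.)

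With this estimate, both inequalities drop out symmetrically. For the upper bound, let $\mu^\star$ be a minimizer in the definition of $\acfdec_\eps(f)$; for every $\nu$ with $\mu^\star I_f\nu \le \eps^2$ we have $\mu^\star\Delta\nu \le \mu^\star\Delta_f\nu + \eps$, so maximizing over this feasible set of $\nu$ gives $\acdec_\eps(f) \le \max\{\mu^\star\Delta\nu : \mu^\star I_f\nu \le \eps^2\} \le \acfdec_\eps(f) + \eps$. For the lower bound, let $\mu^\dagger$ be a minimizer in the definition of $\acdec_\eps(f)$ and run the same argument with the roles reversed, using $\mu^\dagger\Delta_f\nu \le \mu^\dagger\Delta\nu + \eps$ on the feasible set of $\mu^\dagger$, to obtain $\acfdec_\eps(f) \le \acdec_\eps(f) + \eps$, i.e.\ $\acfdec_\eps(f) - \eps \le \acdec_\eps(f)$.

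The only genuinely delicate step is the averaging argument for the scalar bound --- getting Jensen's inequality in the correct direction and lining up the divergence in \cref{asm:reward-data-processing} with the one defining $I_f$. Everything else is a routine comparison of two min--max values over an identical constraint set.
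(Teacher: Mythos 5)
Your proof is correct and follows essentially the same route as the paper: the identical decomposition $\mu\Delta\nu = \mu\Delta_f\nu + \EE_{\pi\sim\mu,\,g\sim\nu}[r_f(\pi)-r_g(\pi)]$, with the cross term bounded by $\sqrt{\mu I_f\nu}\le\eps$ via \cref{asm:reward-data-processing} applied per model plus Jensen (this is exactly the paper's \cref{lem:data-processing-with-envs}), and then a comparison of the two constrained min--max programs over their common feasible set. Your explicit handling of the minimizers and the remark on the orientation of the divergence are slightly more careful than the paper's terse chain of inequalities, but the substance is the same.
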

The proof is in \cref{app:proof-lem-dec-Delta_f-comparison}.
We remark that \cref{alg:e2d} where the sampling distribution is computed for $\acfdec_{\eps}(\hat f_t)$ and $\Delta_f$ achieves a bound analogous to \cref{thm:worst-case}, as long as \cref{asm:reward-data-processing} holds. 
For details see \cref{lem:worst-case-deltaf} in \cref{app:coef-relations}.

\paragraph{Upper Bounds via Decoupling}
First, we introduce the \emph{information ratio}, 
\begin{align}
	\Psi_{f}(\mu, \nu) = \frac{(\mu \Delta_f \nu)^2}{\mu I_f \nu} \nonumber
\end{align}
The definition is closely related to the Bayesian information ratio \citep{russo2016information}, where $\nu$ takes the role of a prior over $\cH$. 
The Thompson sampling distribution is $\mu_\nu^\text{TS} = \sum_{h \in \cH} \nu_h e_{\pi^*_h}$. The decoupling coefficient, $\dc(f)$, \citep[Definition 1]{zhang2022feel} is defined as the smallest number $K \geq 0$, such that for all distributions $\nu \in \sP(\cH)$,
\begin{align}
\label{eq:dc-inq}
    \mu_\nu^\text{TS} \Delta_f \nu 
    \leq \inf_{\eta \geq 0}\bigg\{ \eta \sum_{g, h \in \cH}  \nu_g \nu_h e_{\pi^*_h} (r_g - r_f)^2 + \frac{K}{4\eta}\bigg\}
    = \sqrt{K \textstyle\sum_{g, h \in \cH} \nu_g \nu_h e_{\pi^*_h} (r_g - r_f)^2}
\end{align}

The next lemma provides upper bounds on the $\acdec_{\eps}(f)$ in terms of the information ratio, which is further upper-bounded by the decoupling coefficient.
\begin{lemma}
\label{lem:d*-to-dc}
	With $\Psi(f) = \max_{\nu \in \cH} \min_{\mu \in \sP(\Pi)}\Psi_f(\mu, \nu)$ and \cref{asm:reward-data-processing} satisfied, we have
	\begin{align*}
		\acfdec_{\eps}(f) \leq \eps \sqrt{\Psi(f)} \leq \eps \sqrt{\dc(f)}
	\end{align*}
\end{lemma}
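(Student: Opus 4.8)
The statement splits into two independent inequalities, $\acfdec_\eps(f) \le \eps\sqrt{\Psi(f)}$ and $\Psi(f) \le \dc(f)$, and I would handle them separately.

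For the first inequality the plan is to pass to the Lagrangian form of the constrained program, exactly as in \cref{eq:L2,eq:L3} but with $\Delta_f$ in place of $\Delta$. Concretely, for any feasible $\nu$ (i.e.\ $\mu I_f\nu \le \eps^2$) and any $\lambda \ge 0$ we have $\mu\Delta_f\nu \le \mu\Delta_f\nu - \lambda(\mu I_f\nu - \eps^2)$, and dropping the feasibility constraint inside the max only increases it, so
\[
	\acfdec_\eps(f) \;\le\; \lambda\eps^2 + \min_{\mu\in\sP(\Pi)}\max_{\nu\in\sP(\cH)}\big(\mu\Delta_f\nu - \lambda\,\mu I_f\nu\big).
\]
The inner objective is bilinear on the compact convex sets $\sP(\Pi)$ and $\sP(\cH)$, so Sion's theorem (already invoked in the paper for \cref{eq:L1,eq:L2,eq:L3}) lets us exchange the $\min$ and the $\max$. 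This exchange is the crucial move: after it, the minimizing $\mu$ may depend on $\nu$, which is precisely what the definition $\Psi(f) = \max_\nu\min_\mu\Psi_f(\mu,\nu)$ supplies. Fix $\nu$ and pick $\mu$ (nearly) attaining $\min_\mu\Psi_f(\cdot,\nu) \le \Psi(f)$, so that $(\mu\Delta_f\nu)^2 \le \Psi(f)\,\mu I_f\nu$. Writing $a = \mu\Delta_f\nu$ and $b = \mu I_f\nu \ge 0$: if $a \le 0$ then $a - \lambda b \le 0$, and otherwise $b \ge a^2/\Psi(f)$ gives $a - \lambda b \le a - \lambda a^2/\Psi(f) \le \Psi(f)/(4\lambda)$ by maximizing the concave quadratic in $a$. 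Hence $\max_\nu\min_\mu(\mu\Delta_f\nu - \lambda\mu I_f\nu) \le \Psi(f)/(4\lambda)$, and therefore $\acfdec_\eps(f) \le \inf_{\lambda > 0}\{\lambda\eps^2 + \Psi(f)/(4\lambda)\} = \eps\sqrt{\Psi(f)}$ by AM--GM with $\lambda = \sqrt{\Psi(f)}/(2\eps)$.

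For the second inequality I would use the Thompson sampling distribution $\mu_\nu^{\TS} = \sum_{h}\nu_h e_{\pi^*_h}$ as an explicit witness in $\min_\mu\Psi_f(\mu,\nu)$. Applying \cref{asm:reward-data-processing} at the point mass $\mu = e_\pi$ and squaring yields $(r_f(\pi) - r_g(\pi))^2 \le D(M_f(\pi)\|M_g(\pi))$; identifying $D$ with the divergence defining $I_f$ (for KL, up to the argument ordering that the paper treats as immaterial) this reads $(r_f(\pi) - r_g(\pi))^2 \le I_f(\pi,g)$ pointwise. Summing against $\nu\otimes\nu$ and evaluating at the decisions $\pi^*_h$,
\[
	\sum_{g,h\in\cH}\nu_g\nu_h\, e_{\pi^*_h}(r_g - r_f)^2 \;\le\; \sum_{g,h\in\cH}\nu_g\nu_h\, I_f(\pi^*_h,g) \;=\; \mu_\nu^{\TS} I_f \nu.
\]
Plugging this into the decoupling inequality \cref{eq:dc-inq} with $K = \dc(f)$ gives $\mu_\nu^{\TS}\Delta_f\nu \le \sqrt{\dc(f)\,\mu_\nu^{\TS} I_f\nu}$, i.e.\ $\Psi_f(\mu_\nu^{\TS},\nu) = (\mu_\nu^{\TS}\Delta_f\nu)^2/(\mu_\nu^{\TS} I_f\nu) \le \dc(f)$; the degenerate case $\mu_\nu^{\TS} I_f\nu = 0$ forces $\mu_\nu^{\TS}\Delta_f\nu \le 0$ and is handled by the convention $0/0 = 0$. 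Taking $\min$ over $\mu$ and then $\max$ over $\nu$ gives $\Psi(f) \le \dc(f)$.

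The main obstacle is justifying the minimax exchange in the first part so that the certificate $\mu$ is permitted to depend on $\nu$; once that is in place the remaining steps are one-line convex optimizations (the quadratic maximization in $a$ and the AM--GM in $\lambda$). A secondary point requiring care is the data-processing step: one must invoke \cref{asm:reward-data-processing} at point masses and reconcile the divergence appearing there with the one defining $I_f$ — automatic for symmetric divergences such as Hellinger, and the mild argument-ordering caveat noted above for KL.
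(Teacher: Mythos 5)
Your proposal is correct and follows essentially the same route as the paper's proof: the first inequality via the Lagrangian/max--min form of $\acfdec_\eps$ combined with the AM--GM step $\mu\Delta_f\nu - \lambda\,\mu I_f\nu \le (\mu\Delta_f\nu)^2/(4\lambda\,\mu I_f\nu)$ and optimization over $\lambda$, and the second via the Thompson-sampling witness together with \cref{asm:reward-data-processing} to bound $\Psi(f)\le\dc(f)$. Your version merely spells out the minimax exchange and the sign/degenerate cases that the paper leaves implicit.
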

The proof follows directly using the AM-GM inequality, see \cref{app:d*-to-dc-proof}. By \cite[Lemma 2]{zhang2022feel}, this further implies $\acfdec_\eps \leq \eps \sqrt{d}$. An analogous result for the generalized information ratio \citep{lattimore2021mirror} that recovers rates $\eps^{\rho}$ for $\rho \leq 1$ is given in \cref{app:generalized-information-ratio}. 

\paragraph{PAC to Regret} Another useful way to upper bound the $\acfdec_\eps$ is via an analogous definition for the PAC setting \cite[c.f. Eq. (10),][]{foster2023tight}:
\begin{align}
    \label{eq:pac-dec}
    \acfpacdec_\eps(f) = \min_{\mu \in \sP(\Pi)} \max_{\nu \in \cH} \delta_f \nu \st \mu I_f \nu \leq \eps^2
\end{align}
\begin{lemma}
	\label{lem:p-dec bound}
      Under \cref{asm:reward-data-processing},
	\begin{align*}
		\acfdec_\eps(f) \leq \min_{p \in [0,1]} \left\{ \acfpacdec_{{\eps}p^{-1/2}}(f) + p \Delta_{\max}\right\} 
	\end{align*}
\end{lemma}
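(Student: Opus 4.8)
The plan is to run the classical PAC\nobreakdash-to\nobreakdash-regret reduction inside the saddle-point definitions: play the greedy decision $\pi^*_f$ of the reference model with probability $1-p$, and allocate the remaining mass $p$ to an optimal PAC exploration distribution, then feed this mixture into the outer minimization of $\acfdec_\eps(f)$. Concretely, fix $p \in (0,1]$, let $\bar\mu \in \sP(\Pi)$ attain the minimum defining $\acfpacdec_{\eps p^{-1/2}}(f)$, so that $\acfpacdec_{\eps p^{-1/2}}(f) = \max\{\delta_f\nu : \nu \in \sP(\cH),\ \bar\mu I_f\nu \le \eps^2/p\}$, and set $\mu := (1-p)\, e_{\pi^*_f} + p\,\bar\mu \in \sP(\Pi)$.

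The proof then has two short steps. First, a feasibility transfer: for any $\nu \in \sP(\cH)$ with $\mu I_f\nu \le \eps^2$, nonnegativity of the entries of $I_f$ (each a KL divergence) gives $p\,\bar\mu I_f\nu \le \mu I_f\nu \le \eps^2$, hence $\bar\mu I_f\nu \le \eps^2/p = (\eps p^{-1/2})^2$, so $\nu$ is admissible in the PAC program at $\bar\mu$ and therefore $\delta_f\nu \le \acfpacdec_{\eps p^{-1/2}}(f)$. Second, the decoupling identity $\mu\Delta_f\nu = \delta_f\nu + \mu\Delta_f e_f$ recorded just below \cref{eq:d*-Delta_f}: since $e_{\pi^*_f}\Delta_f e_f = r_f(\pi^*_f)-r_f(\pi^*_f)=0$, the exploitation component is free, and $\mu\Delta_f e_f = p\,\bar\mu\Delta_f e_f = p\sum_\pi \bar\mu_\pi\Delta(\pi,f) \le p\Delta_{\max}$. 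Combining, $\mu\Delta_f\nu \le \acfpacdec_{\eps p^{-1/2}}(f) + p\Delta_{\max}$ for every feasible $\nu$; taking the maximum over such $\nu$ and then noting that $\mu$ is merely one admissible choice in the outer minimization yields $\acfdec_\eps(f) \le \acfpacdec_{\eps p^{-1/2}}(f) + p\Delta_{\max}$, and taking the infimum over $p$ finishes the proof (the value at $p=1$ is attained, and the $p\to 0$ endpoint is recovered by continuity, or directly by taking $\mu = e_{\pi^*_f}$).

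There is no serious obstacle here; the one genuine design choice — and the step I would double-check most carefully — is the calibration between the mixing weight and the PAC confidence radius: it is precisely the inflation of the admissible divergence from $\eps^2$ to $\eps^2/p$ under the mixture that forces the radius $\eps p^{-1/2}$ in the PAC\nobreakdash-DEC, while the same weight $p$ scales down the bias term to $p\Delta_{\max}$, and these two effects must be produced by one and the same $p$. Minor technical points to tidy up are that the minimum in $\acfpacdec$ is attained (true under the paper's compactness conventions, otherwise take a near\nobreakdash-optimal $\bar\mu$ and pass to a limit) and the $p=0$ corner of the infimum. I would also verify where exactly Assumption~\ref{asm:reward-data-processing} is invoked: the mixing argument above only uses boundedness of the reward gaps by $\Delta_{\max}$, so the assumption may be present chiefly so that this bound on $\acfdec_\eps(f)$ can afterwards be converted into one on $\acdec_\eps(f)$ through \cref{lem:dec-Delta_f-comparison}.
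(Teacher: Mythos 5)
Your proof is correct and uses the same key construction as the paper --- the mixture $\mu=(1-p)e_{\pi^*_f}+p\bar\mu$, the decoupling $\mu\Delta_f\nu=\delta_f\nu+\mu\Delta_f e_f$ with $e_{\pi^*_f}\Delta_f e_f=0$, and the calibration of the PAC radius to $\eps p^{-1/2}$ --- the only difference being that the paper runs the argument through the Lagrangian representation \cref{eq:L2} with the change of variable $\lambda'=\lambda p$, whereas you argue directly on the constrained programs via a feasibility transfer ($p\,\bar\mu I_f\nu\le\mu I_f\nu\le\eps^2$). Your side observation is also accurate: neither your argument nor the paper's actually invokes \cref{asm:reward-data-processing} in this step; it is needed only to relate $\acfdec_\eps$ back to $\acdec_\eps$ via \cref{lem:dec-Delta_f-comparison}.
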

The proof is given in \cref{app:proof-pac-to-regret}. \Cref{lem:p-dec bound} combined with \cref{thm:worst-case} leads to $\cO(n^{2/3})$ upper bounds on the regret that are reminiscent of so-called globally observable games in linear partial monitoring \citep{kirschner2023linear}.

\paragraph{Application to Linear Feedback Models}

To illustrate the techniques introduced, we compute a regret bound for Algorithm \ref{alg:e2d} for  linear bandits with side-observations (\cref{ex:linear-bandits,ex:semi-bandits}).
\begin{lemma}\label{lem:dec-bounds-linear}
	For linear bandits with side-observations and divergence $I_f(\pi, g) = \|M_{\pi}(g-f)\|^2$, 
	\begin{align*}
		\acfpacdec_{\eps}(f) \leq \min_{\mu \in \sP(\Pi)} \max_{b \in \Pi} \eps \|\phi_b\|_{V(\mu)^{-1}} \leq \eps \sqrt{d}
	\end{align*}
	where $V(\mu) = \sum_{\pi \in \Pi} \mu_\pi M_{\pi} M_\pi^\T$. 
	Moreover, denoting $\Omega = \min_{\mu \in \sP(\Pi)} \max_{b \in \Pi} \|\phi_b\|_{V(\mu)^{-1}}$,
		\begin{align*}
		\acfdec_{\eps}(f) \leq \min\left(\eps \sqrt{\Psi(f)},
  2\eps^{2/3} \Omega^{1/3} \Delta_{\max}^{1/3} \right)
	\end{align*}
\end{lemma}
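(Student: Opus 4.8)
The plan is to first establish the bound on $\acfpacdec_{\eps}(f)$ and then deduce the two bounds on $\acfdec_{\eps}(f)$ by combining it with Lemmas~\ref{lem:d*-to-dc} and~\ref{lem:p-dec bound}. For the first inequality, recall that $\acfpacdec_\eps(f) = \min_{\mu} \max_{\nu} \delta_f\nu$ subject to $\mu I_f \nu \le \eps^2$, where $\delta_f(g) = r_g(\pi^*_g) - r_f(\pi^*_f)$. Since the objective $\delta_f\nu$ is linear in $\nu$ and the constraint $\mu I_f\nu \le \eps^2$ is linear in $\nu$, the inner maximum over $\nu\in\sP(\cH)$ is attained at a Dirac $\nu = e_g$, so it suffices to bound $\max_{g\in\cH}\{\delta_f(g) : \mu I_f e_g \le \eps^2\}$. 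Fixing $\mu$ and writing $V(\mu) = \sum_\pi \mu_\pi M_\pi M_\pi^\T$, the constraint becomes $(g-f)^\T V(\mu)(g-f) = \sum_\pi \mu_\pi \|M_\pi(g-f)\|^2 \le \eps^2$. Now I would bound the objective: for the optimal arm $b = \pi^*_g$ of the confusing model $g$, using that $r_g(b) \le r_f(b) + \ip{\phi_b, g-f}$ is actually an equality in the linear case, and $r_f(\pi^*_f) \ge r_f(b)$, we get $\delta_f(g) \le r_g(b) - r_f(b) = \ip{\phi_b, g-f}$. Wait — more carefully, $\delta_f(g) = r_g(\pi^*_g) - r_f(\pi^*_f) \le r_g(\pi^*_g) - r_f(\pi^*_g) = \ip{\phi_{\pi^*_g}, g-f}$ since $r_f(\pi^*_f) \ge r_f(\pi^*_g)$. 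Then by Cauchy--Schwarz in the $V(\mu)$ geometry, $\ip{\phi_b, g-f} \le \|\phi_b\|_{V(\mu)^{-1}} \|g-f\|_{V(\mu)} \le \eps\|\phi_b\|_{V(\mu)^{-1}}$, where we used the constraint. Maximizing over $b\in\Pi$ and then minimizing over $\mu$ gives the first inequality $\acfpacdec_\eps(f) \le \min_\mu\max_b \eps\|\phi_b\|_{V(\mu)^{-1}}$. The second inequality $\min_\mu\max_b\|\phi_b\|_{V(\mu)^{-1}} \le \sqrt{d}$ is exactly the Kiefer--Wolfowitz / $G$-optimal design bound, using the assumption $\phi_\pi\phi_\pi^\T \preceq M_\pi^\T M_\pi$ so that $\sum_\pi\mu_\pi\phi_\pi\phi_\pi^\T \preceq V(\mu)$; applying Kiefer--Wolfowitz to the design with feature vectors $\phi_\pi$ and covariance $\sum_\pi\mu_\pi\phi_\pi\phi_\pi^\T$ yields $\max_b \|\phi_b\|^2_{(\sum\mu_\pi\phi_\pi\phi_\pi^\T)^{-1}} \le d$, and monotonicity of the inverse under the Loewner order gives $\|\phi_b\|^2_{V(\mu)^{-1}} \le \|\phi_b\|^2_{(\sum\mu_\pi\phi_\pi\phi_\pi^\T)^{-1}} \le d$.

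For the bound on $\acfdec_\eps(f)$, the first branch $\eps\sqrt{\Psi(f)}$ is immediate from Lemma~\ref{lem:d*-to-dc}. For the second branch, I would apply Lemma~\ref{lem:p-dec bound}: $\acfdec_\eps(f) \le \min_{p\in[0,1]}\{\acfpacdec_{\eps p^{-1/2}}(f) + p\Delta_{\max}\}$. Plugging in the just-derived bound $\acfpacdec_{\eps p^{-1/2}}(f) \le \eps p^{-1/2}\Omega$ where $\Omega = \min_\mu\max_b\|\phi_b\|_{V(\mu)^{-1}}$, the objective becomes $\eps p^{-1/2}\Omega + p\Delta_{\max}$. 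Optimizing over $p$: setting the derivative to zero gives $-\tfrac12\eps p^{-3/2}\Omega + \Delta_{\max} = 0$, i.e. $p^{3/2} = \tfrac{\eps\Omega}{2\Delta_{\max}}$, so $p = (\eps\Omega/(2\Delta_{\max}))^{2/3}$; substituting back and simplifying (the two terms balance up to constants) yields $\acfdec_\eps(f) \le c\,\eps^{2/3}\Omega^{1/3}\Delta_{\max}^{2/3}$ — I should double-check the exponent of $\Delta_{\max}$ against the claimed $\Delta_{\max}^{1/3}$; it is possible there is a typo in the statement or that $\Omega$ absorbs a $\Delta_{\max}$ factor, and I would reconcile this in the writeup. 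Modulo this, one needs to verify $p \le 1$, which holds when $\eps\Omega \le 2\Delta_{\max}$ (the regime of interest; otherwise the first branch or a trivial bound applies), and track the constant to land on the factor $2$.

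The main obstacle I anticipate is not any single hard estimate but rather (i) getting the constants and exponents in the $p$-optimization exactly right to match the stated $2\eps^{2/3}\Omega^{1/3}\Delta_{\max}^{1/3}$, including resolving the apparent $\Delta_{\max}^{2/3}$ vs.\ $\Delta_{\max}^{1/3}$ discrepancy, and (ii) carefully justifying that the inner $\nu$-maximum reduces to a Dirac and that the reduction to $b = \pi^*_g$ is valid even when $\pi^*_g \notin$ support considerations matter — though since $\nu$ ranges over all of $\sP(\cH)$ and $b$ ranges over all of $\Pi$, this is clean. The Kiefer--Wolfowitz step is standard but I would cite it explicitly (e.g.\ \citet{lattimore2020bandit}, Chapter on optimal design) and spell out the Loewner-order monotonicity that lets the side-observation covariance $V(\mu)$ replace the reward covariance. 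Everything else is routine application of Cauchy--Schwarz and the already-proven lemmas.
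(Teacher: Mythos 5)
Your overall strategy matches the paper's: bound $\acfpacdec_\eps$ via the $V(\mu)$-geometry plus Kiefer--Wolfowitz, then obtain the second display by combining \cref{lem:d*-to-dc} and \cref{lem:p-dec bound}. However, your first step contains a genuine gap: the reduction of the inner maximum to Diracs is invalid. You are maximizing the linear functional $\nu \mapsto \delta_f \nu$ over $\{\nu \in \sP(\cH) : \mu I_f \nu \leq \eps^2\}$, which is the simplex cut by a halfspace; its extreme points include two-point mixtures on the boundary $\mu I_f \nu = \eps^2$, and the constrained maximum can strictly exceed the maximum over feasible Diracs (take two models, one with a large gap but $\mu I_f e_g = 2\eps^2$ and one with zero gap and zero information: every Dirac is infeasible or gives value zero, yet the half-half mixture is feasible and attains half the large gap). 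The paper sidesteps this by first passing to the Lagrangian $\min_{\lambda \geq 0} \max_{\nu} \delta_f \nu - \lambda(\mu I_f \nu - \eps^2)$, where the maximum over the unconstrained simplex \emph{is} attained at a Dirac, then completing the square over $g \in \bR^d$ to get $\frac{1}{4\lambda}\|\phi_b\|_{V(\mu)^{-1}}^2$ and optimizing $\lambda$. Your primal route can be repaired without dualizing: for a mixture, $\delta_f \nu \leq \sum_g \nu_g \ip{\phi_{\pi^*_g}, g-f} \leq \max_{b}\|\phi_b\|_{V(\mu)^{-1}} \sum_g \nu_g \|g-f\|_{V(\mu)} \leq \max_b \|\phi_b\|_{V(\mu)^{-1}} \sqrt{\mu I_f \nu}$ by Cauchy--Schwarz and Jensen --- but that final Jensen step over the mixture is exactly what your write-up is missing. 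The Kiefer--Wolfowitz step and the Loewner-monotonicity argument are fine and identical to the paper's.

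For the second part, your optimization over $p$ is set up correctly but the reported exponents are miscomputed. With $p^* = (\eps\Omega/(2\Delta_{\max}))^{2/3}$ the value is $(2^{1/3} + 2^{-2/3})(\eps\Omega)^{2/3}\Delta_{\max}^{1/3} \leq 2\,\eps^{2/3}\Omega^{2/3}\Delta_{\max}^{1/3}$: the exponent $2/3$ sits on $\Omega$ and $1/3$ on $\Delta_{\max}$, not the $\Omega^{1/3}\Delta_{\max}^{2/3}$ you wrote. Your instinct that something does not match the statement is correct, but the discrepancy is in the $\Omega$ exponent (the lemma's $\Omega^{1/3}$ appears to be a typo for $\Omega^{2/3}$; note $\Delta_{\max}^{1/3}$ in the statement is consistent with the computation). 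You rightly flag the regime $p^* > 1$; there the trivial bound $\acfdec_\eps(f) \leq \Delta_{\max} \leq (\eps\Omega)^{2/3}\Delta_{\max}^{1/3}$, valid whenever $\eps\Omega \geq \Delta_{\max}$, closes the case. The first branch via \cref{lem:d*-to-dc} is immediate, as you say.
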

The proof is given in \cref{app:proof-dec-bounds-linear}. While in the worst-case for linear bandits, there is no improvement over the standard $\cO(d \sqrt{n})$ without further refinement or specification of the upper bounds, in the case of linear side-observations there is an improvement whenever $\Omega \leq \max_{f \in \cM} \Psi(f)$.
To exemplify the improvement, consider a semi-bandit with a ``revealing'' action $\hat \pi$, e.g. $M_{\hat \pi} = \eye_d$. Here, the regret bound improves to $R_n \leq \min \{d \sqrt{n}, d^{1/3} n^{2/3}\}$, since then $\acfpacdec_\eps(f) \leq \eps$. The corresponding improvement in the regime $n \leq d^4$ might seem modest, but is relevant in high-dimensional and non-parametric models.
Moreover, in (deep) reinforcement learning, high-dimensional models are commonly used and the learner obtains side information in the form of state observations. 
Therefore, it is plausible that the $n^{2/3}$ rate is dominant even for a moderate horizon. Exploring this effect in reinforcement learning is therefore an important direction for future work. \looseness=-1

Notably, this improvement is \emph{not} observed by upper confidence bound algorithms and Thompson sampling, because both approaches discard informative but suboptimal actions early on \cite[c.f.~][]{lattimore2017end}, including the action $\hat \pi$ in the example above. E2D for a constant offset parameter $\lambda >0$, in principle, attains the better rate, but only if one pre-commits to a fixed horizon. Lastly, we note that a similar effect was observed for information-directed sampling in sparse high-dimensional linear bandits \citep{hao2020high}. \looseness=-1

\subsection{Computational Aspects} \label{sec:comp-aspects}

For finite model classes, \cref{alg:e2d} can be readily implemented. Since almost no structure is imposed on the gap and information matrices of size $|\Pi| \times |\cH|$, avoiding scaling with $|\Pi|\cdot|\cH|$ seems hardly possible without introducing additional assumptions. Even in the finite case, solving \cref{eq:d*} is not immediate because the corresponding Lagrangian is not convex-concave. A practical approach is to solve the inner saddle point for \cref{eq:L2} as a function of $\lambda$. Strong duality holds for the inner problem, and one can obtain a solution efficiently by solving the corresponding linear program using standard solvers. It then remains to optimize over $\lambda \geq 0$. This can be done, for example, via a grid search over the range $[0, \max_{f \in \cH}\eps^{-2} \acdec_{\eps}(f)]$.

In the linear setting, the above is not satisfactory because most commonly $\cM$ is identified with parameters in $\bR^d$. As noted before, ridge regression can be used instead of online aggregation while preserving the optimal scaling of the estimation error (see \cref{app:ridge}). The next lemma further shows that the saddle point problem \cref{eq:d*} can be rewritten to only scale with the size of the decision set $|\Pi|$. \looseness=-1

\begin{lemma}\label{lem:linear-dec}
	Consider linear bandits with side observations, $\cH = \bR^d$ and quadratic divergence, $I_f(\pi, g) = \|M_{\pi}(g-f)\|^2$, and denote $\phi_\mu = \sum_{\pi \in \Pi} \mu_\pi \phi_\pi$ and $V(\mu) = \sum_{\pi \in \Pi} \mu_{\pi} M_\pi^\top M_{\pi}$. Then
	\begin{align*}
		\acfdec_{\eps}(\hat f_t) = \min_{\lambda \geq 0} \min_{\mu \in \sP(\Pi)} \max_{b \in \Pi} \ip{\phi_b - \phi_\mu, \hat f_t} + \frac{1}{4\lambda} \|\phi_b\|_{V(\mu)^{-1}}^2  + \lambda \eps^2
	\end{align*}
	Moreover, the objective is convex in $\mu \in \sP(\Pi)$. 
\end{lemma}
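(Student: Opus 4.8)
The plan is to collapse the minimax problem defining $\acfdec_\eps(\hat f_t)$ to a program over $\sP(\Pi)$ alone, in three moves: dualize the $\eps^2$-constraint, solve the resulting maximization over $\nu\in\sP(\cH)=\sP(\bR^d)$ in closed form, and then read off convexity from the explicit formula.

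First I would pass to the Lagrangian. By the same reasoning that produces \cref{eq:L1,eq:L2,eq:L3} (equivalently, the identity in \cref{eq:different-decs}), applied with the gap matrix $\Delta_f$ in place of $\Delta$, one has
\[
\acfdec_\eps(\hat f_t) = \min_{\lambda\ge0}\ \min_{\mu\in\sP(\Pi)}\Big\{\lambda\eps^2 + \max_{\nu\in\sP(\cH)}\big(\mu\Delta_f\nu - \lambda\,\mu I_{\hat f_t}\nu\big)\Big\}\,,
\]
using that $\nu = e_{\hat f_t}$ is strictly feasible (it gives $\mu I_{\hat f_t}\nu=0<\eps^2$) for strong duality of the inner problem, and that the two outer infima over $\lambda$ and $\mu$ commute. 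Next, for fixed $\mu,\lambda$, I would use that $\nu\mapsto\mu\Delta_f\nu - \lambda\,\mu I_{\hat f_t}\nu$ is linear, so its supremum over $\sP(\cH)$ is a supremum over point masses $e_g$, $g\in\bR^d$. Here $r_g(\pi_g^*)=\max_{b\in\Pi}\ip{\phi_b,g}$ and $\sum_\pi\mu_\pi r_{\hat f_t}(\pi)=\ip{\phi_\mu,\hat f_t}$ give $\mu\Delta_f e_g = \max_{b\in\Pi}\ip{\phi_b,g}-\ip{\phi_\mu,\hat f_t}$, while $\sum_\pi\mu_\pi\|M_\pi(g-\hat f_t)\|^2 = \|g-\hat f_t\|^2_{V(\mu)}$. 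Exchanging the maxima over $g$ and $b$ and substituting $h=g-\hat f_t$, the $b$-th inner problem becomes $\max_{h\in\bR^d}\{\ip{\phi_b,h}-\lambda\|h\|^2_{V(\mu)}\}$, a concave quadratic with value $\tfrac1{4\lambda}\|\phi_b\|^2_{V(\mu)^{-1}}$ (maximizer $h^\star=\tfrac1{2\lambda}V(\mu)^{-1}\phi_b$ when $V(\mu)\succ0$; value $+\infty$ otherwise, matching the convention $\|\phi_b\|^2_{V(\mu)^{-1}}:=+\infty$ for $\phi_b\notin\mathrm{range}\,V(\mu)$, so such $\mu$ are never minimizers provided some $\mu$ has $V(\mu)\succ0$). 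Substituting back and merging with the display above yields exactly the claimed identity.

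For the convexity statement I would fix $\lambda>0$ and argue termwise. The map $\mu\mapsto\ip{\phi_b-\phi_\mu,\hat f_t}$ is affine, since $\phi_\mu=\sum_\pi\mu_\pi\phi_\pi$ is linear in $\mu$. The map $\mu\mapsto\phi_b^\top V(\mu)^{-1}\phi_b$ is convex: $V(\mu)=\sum_\pi\mu_\pi M_\pi^\top M_\pi$ is affine in $\mu$, and the matrix-fractional function $(x,Y)\mapsto x^\top Y^{-1}x$ is jointly convex on $\bR^d\times\{Y\succ0\}$ (its epigraph $\{x^\top Y^{-1}x\le t,\ Y\succ0\}$ is, by Schur complements, a convex set), so its composition with the affine map $\mu\mapsto(\phi_b,V(\mu))$ is convex, and the closed convex extension by $+\infty$ off the range preserves convexity on all of $\sP(\Pi)$. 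Adding the affine term $\lambda\eps^2$ and taking the pointwise maximum over the finite set $b\in\Pi$ of these convex functions is again convex.

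I expect the only real friction to be two ``soft'' points: justifying the inner strong duality and the reduction to Diracs over the non-compact class $\cH=\bR^d$ (which I would handle exactly as the paper does for \cref{eq:L1,eq:L2,eq:L3}, leaning on the strictly feasible $\nu=e_{\hat f_t}$), and the bookkeeping for singular $V(\mu)$, which is cleanly absorbed into the standard $+\infty$ convention for the matrix-fractional function so that both the identity and the convexity claim hold unconditionally on $\sP(\Pi)$.
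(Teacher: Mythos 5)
Your proposal is correct and follows essentially the same route as the paper's proof: pass to the Lagrangian, reduce the maximum over $\sP(\cH)$ to point masses, solve the concave quadratic over $g\in\bR^d$ in closed form to obtain $\tfrac{1}{4\lambda}\|\phi_b\|_{V(\mu)^{-1}}^2$, and deduce convexity in $\mu$ from convexity of the matrix inverse (your Schur-complement/matrix-fractional argument is the same fact as the paper's operator-convexity of $Y\mapsto Y^{-1}$). Your extra care about singular $V(\mu)$ and the non-compactness of $\cH=\bR^d$ is a welcome tightening of details the paper leaves implicit, but does not change the argument.
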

The proof is a straightforward calculation provided in \cref{app:proof-lem-linear-dec}. 
Note that the saddle point expression is analogous to \cref{eq:L2}, and in fact, one can linearize the inner maximization over $\sP(\Pi)$, such that the inner saddle point becomes convex-concave. 
This leads to expressions equivalent to \cref{eq:L1,eq:L3}, albeit the objective is no longer linear in $\mu \in \sP(\Pi)$. 
We use \cref{lem:linear-dec} to employ the same strategy as before: As a function of $\lambda \geq 0$, solve the inner problem of the expression in \cref{lem:linear-dec}, for example, as a convex program with $|\Pi|$ variables and $|\Pi|$ constraints (\cref{app:lp-fixed-lambda}). Then all that remains is to solve a one-dimensional optimization problem over $\lambda  \in [0, \max_{f \in \cH}\eps^{-2} \acdec_{\eps}(f)]$. We demonstrate this approach in \cref{app:experiments} to showcase the performance of E2D on simple examples.

\section{Conclusion}

We introduced \AETD, an algorithm based on the estimation-to-decisions framework for sequential decision-making with structured observations. The algorithm optimizes the \emph{average-constrained} decision-making coefficient, which can be understood as a reparametrization of the corresponding offset version. The reparametrization facilitates an elegant anytime analysis and makes setting all remaining hyperparameters immediate. We demonstrate the improvement with a novel bound for linear bandits with side-observations, that is not attained by previous approaches. Lastly, we discuss how the algorithm can be implemented for finite and linear model classes. Nevertheless, much remains to be done. For example, one can expect the reference model to change very little from round to round, and therefore, it seems wasteful to solve \cref{eq:d*} from scratch repetitively. Preferable instead would be an incremental scheme that iteratively computes updates to the sampling distribution. \looseness=-1

\begin{ack}
Johannes Kirschner gratefully acknowledges funding from the SNSF Early Postdoc.Mobility fellowship P2EZP2\_199781.
\end{ack}

\bibliographystyle{abbrvnat}
\bibliography{refs}

\newpage

\appendix

\section{Online Density Estimation}\label{app:ewa}

\begin{algorithm}[t]
	\DontPrintSemicolon
	\SetAlgoVlined
	\SetAlgoNoLine
	\SetAlgoNoEnd
	
	\caption{Exponential Weights Algorithm (EWA) for Density Estimation} \label{alg:ewa}
	\Input{Finite model class $\cM$, data $\cD_{t} = \{(y_1, \pi_1), \dots, (y_t, \pi_t)\}$, Learning rate $\eta > 0$}
	Define $L(f) = -\sum_{s=1}^t \log M_f(y_s|\pi_s)$\;
	Let $p(f) \propto \exp(-\eta L(f))$\;
	For convex $\cH$: Return $\sum_{f \in \cH} p(f) f$\;
	Else: Return $f \sim p(\cdot)$.
\end{algorithm}

For any $f \in \cH$ and $\pi \in \Pi$, we denote by $p(\cdot|\pi, f)$ the the density function of the observation distribution $M_{f}(\pi)$ w.r.t.~a reference measure over the observation space $\cO$. Consider a finite model class $\cH$ and the KL divergence,
\begin{align}
	 e_{\pi} I_{f} e_{g} &= \EE_{y \sim M_{g}(\pi)}[\log \left(\frac{p(y|\pi, g)}{p(y|\pi, f)}\right)]
\end{align}
In this case, the estimation error can be written as follows:
 \begin{align*}
	\Est_n = \EE[\sum_{t=1}^n e_{\pi_t} I_{\hat f_t} e_{f^*}] &= \EE[\sum_{t=1}^n \log(p(y_t|\pi_t, f^*)/ p(y_t|\pi_t, \hat f_t)]\\
	&=\EE[\sum_{t=1}^n \log\left(\frac{1}{p(y_t|\pi_t, \hat f_t)}\right) - \sum_{t=1}^n \log\left(\frac{1}{p(y_t|\pi_t, f^*)}\right)]
\end{align*}
The last line can be understood as the \emph{estimation regret} of the estimates $\hat f_1, \dots, \hat f_n$ under the logarithmic loss. A classical approach to control this term is the \emph{exponential weights algorithm} (EWA) given in \cref{alg:ewa}. For the EWA algorithm, we have the following bound.
\begin{lemma}[EWA for Online Density Estimation]
	For any data stream $\{y_1, \pi_1, \dots, y_n, \pi_n \}$ the predictions $\hat f_1, \dots \hat f_n$ obtained via 
	\cref{alg:ewa} with $\eta = 1$ satisfy
	\begin{align}
		\Est_n \leq \EE[\sum_{t=1}^n \log\left(\frac{1}{p(y_t|\pi_t, \hat f_t)}\right) - \inf_{g \in \cH} \sum_{t=1}^n \log\left(\frac{1}{p(y_t|\pi_t, g)}\right)] \leq \log(|\cH|)
	\end{align}
\end{lemma}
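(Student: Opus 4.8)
The plan is to prove the standard exponential-weights (EWA) regret bound for online density estimation under the logarithmic loss, following the classical argument of \citet{cesa2006prediction}. The key object is the normalization constant of the exponential-weights distribution. Define the cumulative log-loss $L_t(f) = -\sum_{s=1}^t \log p(y_s | \pi_s, f)$, with $L_0(f) = 0$, and the partition function $W_t = \sum_{f \in \cH} \exp(-L_t(f))$, so $W_0 = |\cH|$. The mixture prediction at round $t$ is $\bar p_t(y | \pi_t) = \sum_{f \in \cH} p_t(f)\, p(y | \pi_t, f)$ where $p_t(f) \propto \exp(-L_{t-1}(f))$ is exactly the EWA distribution from \cref{alg:ewa} (with $\eta = 1$) applied to $\cD_{t-1}$.

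First I would establish the telescoping identity relating the incurred losses to the partition function. Observe that
\begin{align*}
	\frac{W_t}{W_{t-1}} = \frac{\sum_{f \in \cH} \exp(-L_{t-1}(f)) \, p(y_t | \pi_t, f)}{\sum_{f \in \cH} \exp(-L_{t-1}(f))} = \sum_{f \in \cH} p_t(f)\, p(y_t | \pi_t, f) = \bar p_t(y_t | \pi_t) \, ,
\end{align*}
so that $-\sum_{t=1}^n \log \bar p_t(y_t | \pi_t) = -\log(W_n / W_0) = \log|\cH| - \log W_n$. Since $W_n = \sum_{f} \exp(-L_n(f)) \geq \exp(-L_n(g))$ for every fixed $g \in \cH$, we get $-\log W_n \leq L_n(g) = -\sum_{t=1}^n \log p(y_t | \pi_t, g)$, and hence $\sum_{t=1}^n \log \frac{1}{\bar p_t(y_t|\pi_t)} - \inf_{g \in \cH} \sum_{t=1}^n \log \frac{1}{p(y_t|\pi_t, g)} \leq \log|\cH|$ for every data stream, which after taking expectations gives the right-hand inequality in the claim.

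Next I would connect the mixture prediction $\bar p_t$ to the algorithm's actual output $\hat f_t$ and to $\Est_n$. For the convex case the algorithm returns the mean parameter $\hat f_t = \sum_f p_t(f) f$; for the non-convex case it samples $\hat f_t \sim p_t$. In the convex case, the bound follows if the observation model satisfies $\log p(y | \pi, \hat f_t) \geq \log \bar p_t(y|\pi)$ pointwise, i.e. concavity of $p \mapsto \log p(y|\pi, \cdot)$ composed with the parametrization via Jensen — for the Gaussian observation models of \cref{ex:linear-bandits,ex:semi-bandits} this is the familiar fact that the log-likelihood is concave in the mean parameter; in the sampled case one uses that $\EE_{\hat f_t \sim p_t}[-\log p(y|\pi,\hat f_t)] \geq -\log \bar p_t(y|\pi)$ by Jensen applied to $-\log$, so the expected log-loss of the sampled predictor dominates that of the mixture. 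Either way, combined with the identity $\Est_n = \EE[\sum_t \log \frac{1}{p(y_t|\pi_t,\hat f_t)} - \sum_t \log\frac{1}{p(y_t|\pi_t,f^*)}]$ derived in the excerpt and the fact that $f^* \in \cH$ so $\inf_{g}$ only decreases the subtracted sum, we obtain the left inequality $\Est_n \leq \EE[\sum_t \log\frac{1}{p(y_t|\pi_t,\hat f_t)} - \inf_{g\in\cH}\sum_t \log\frac{1}{p(y_t|\pi_t,g)}]$, and chaining with the partition-function bound finishes the proof.

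The main obstacle — really the only subtle point — is the passage from the mixture predictor $\bar p_t$ (for which the clean telescoping bound holds) to the single model $\hat f_t$ that \cref{alg:e2d} actually feeds into the DEC computation, since $\Est_n$ is defined in terms of $I_{\hat f_t}$ and not a mixture. This requires either the convexity/concavity structure of the log-likelihood in the convex-class case or a Jensen step in the randomized case, and implicitly uses that $\cH$ (resp.\ its convex hull) is closed under mixtures with the log-loss behaving appropriately; I would state this as the mild regularity assumption under which the lemma is claimed, matching the treatment in \citet[Chapter 9]{cesa2006prediction}. Everything else is the routine partition-function telescoping recalled above.
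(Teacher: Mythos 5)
Your partition-function telescoping is exactly the classical argument behind the result the paper cites (\citealp[Proposition 3.1]{cesa2006prediction}); the identity $W_t/W_{t-1} = \bar p_t(y_t|\pi_t)$, the bound $-\log W_n \leq L_n(g)$, and the reduction of the left inequality to $f^* \in \cH$ are all correct, and this is essentially the same route the paper takes by reference. The problem is the step you yourself single out as the only subtle one: passing from the mixture density $\bar p_t$ to the single model $\hat f_t$ that \cref{alg:ewa} actually returns. Neither of your two resolutions closes this gap. In the sampled case, Jensen applied to $-\log$ gives $\EE_{\hat f_t \sim p_t}[-\log p(y|\pi,\hat f_t)] \geq -\log \bar p_t(y|\pi)$, i.e.\ the sampled predictor's expected log-loss \emph{dominates} the mixture's — which is the wrong direction. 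The regret bound you proved is for the smaller quantity, so it does not transfer to the larger one; indeed the surplus $\EE_{y\sim M_{f^*}(\pi)}\bigl[\log \bar p_t(y|\pi) - \EE_{f\sim p_t}\log p(y|\pi,f)\bigr]$ is nonnegative and can be unbounded for the KL divergence when the posterior retains mass on models assigning vanishing likelihood to observations generated by $f^*$. Writing "either way" and proceeding is where the proof breaks.

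The convex case has the same issue in a different guise. What you need is $p(y|\pi,\sum_f p_t(f) f) \geq \sum_f p_t(f)\, p(y|\pi,f)$ pointwise in $y$, i.e.\ concavity of the \emph{density} (not the log-density) in the parameter. Concavity of the log-likelihood only gives $\log p(y|\pi,\hat f_t) \geq \EE_{f\sim p_t}[\log p(y|\pi,f)]$, and the right-hand side sits \emph{below} $\log \bar p_t(y|\pi)$ by Jensen, so again no comparison in the required direction follows. For the Gaussian models of \cref{ex:linear-bandits,ex:semi-bandits}, $m \mapsto e^{-(y-m)^2/2}$ is concave only for $|y-m|\leq 1$, so the pointwise domination genuinely fails. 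To repair the lemma one has to either (i) keep the estimate as the full mixture $\bar p_t$ and define the estimation error against that improper predictor, (ii) assume bounded log-likelihood ratios (or switch to the squared Hellinger distance, as in \citealp{foster2021statistical}) so that the randomized estimator loses only a constant factor, or (iii) for Gaussian/square-loss models invoke exp-concavity of the loss in the \emph{prediction} $\ip{\phi_\pi,f}$ on a bounded range, which bounds the regret of the mean-parameter predictor but with a constant depending on that range rather than the clean $\log|\cH|$ at $\eta=1$. Your write-up correctly locates the crux but does not supply any of these; as it stands the right-hand inequality of the lemma is established only for the mixture forecaster, not for the $\hat f_t$ appearing in $\Est_n$.
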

For a proof, see \cite[Proposition 3.1]{cesa2006prediction}.

\subsection{Bounding the Estimation Error of Projected Regularized Least-Squares}\label{app:ridge}
In this section, we consider the linear model from \cref{ex:semi-bandits}. We denote by $\|\cdot\|$ the Euclidean norm. For simplicity, the observation maps $M_\pi \in \bR^{m \times d}$ are assumed to have the same output dimension $m \in \bN$. The observation distribution is such that $y_t =  M_{\pi_t}f^* + \xi_t$, where $\xi \in \bR^m$ is random noise such that $\EE_t[\xi] = 0$ and $\EE_t[\|\xi\|^2] \leq \sigma^2$. Here, $\EE_t[\cdot] = \EE[\cdot|\pi_1,y_1, \dots, \pi_{t-1}, y_{t-1}, \pi_t]$ is the conditional observation in round $t$ including the decision $\pi_t$ chosen in round $t$.

We will use the quadratic divergence\footnote{We added a factor of $\frac{1}{2}$ for convenience.}, $e_\pi I_f e_g = \frac{1}{2}\|M_\pi (g-f)\|^2$ This choice corresponds to the Gaussian KL, but we do not require that the noise distribution is Gaussian is the following. In the linear bandit model, this choice reduces to $e_\pi I_f e_g = \frac{1}{2}\ip{\phi_\pi, g-f}^2$.

Let $K \subset \bR^d$ be a closed convex set. Our goal is to control the estimation regret for the projected regularized least-squares estimator,
\begin{align}
	\hat f_t = \argmin_{f \in K} \sum_{s=1}^{t-1} \|M_{\pi_s} f- y_s\|^2 + \|f\|_{V_0}^2 = \text{Proj}_{V_t}\left(V_t^{-1} \sum_{s=1}^{t-1} M_{\pi_s}^\top y_s\right) \label{eq:ridge}
\end{align}
where $V_0$ is a positive definite matrix, $V_t = \sum_{s=1}^{t-1} M_{\pi_s}^\top M_{\pi_s} + V_0$ and $\text{Proj}_{V_t}(\cdot)$ is the orthogonal projection w.r.t.~the $\|\cdot\|_{V_t}$ norm. For $K=\bR^d$ and $V_0 = \eta \eye_d$, this recovers the standard ridge regression. The projection is necessary to bound the magnitude of the squared loss, and the result will depend on an almost-surely bound on the `observed' diameter, $$\max_{f,g \in K} \max_{\pi \in \Pi} \|M_\pi (f-g)\| \leq B$$

Recall that our goal is to bound the estimation error,
\begin{align}
	\Est_n &= \EE[\sum_{t=1}^n e_{\pi_t} I_{\hat f_t} e_{f^*}] = \EE[ \sum_{t=1}^n \tfrac{1}{2} \|M_{\pi_t}  (f^* - \hat f_t)\|^2]
\end{align}
We remark that one can get the following naive bound by applying Cauchy-Schwarz:
\begin{align}
	\sum_{t=1}^n \|M_{\pi_t}  (f^* - \hat f_t)\|^2 \leq \sum_{t=1}^n \|M_{\pi_t}\|_{V_t^{-1}}^2\|f^* - \hat f_t\|_{V_t}^2 \leq \cO(d^2 \log(n)^2)
\end{align}
The last inequality follows from the elliptic potential lemma and standard concentration inequalities \cite[Lemma 19.4 and Theorem 20.5]{lattimore2020bandit}. However, this will lead to an additional $d$-factor in the regret that can be avoided, as we see next.

For $K = \bR^d$, one-dimensional observations and noise bounded in the range $[-\bar B, \bar B]$, one can also directly apply \cite[Theorem 11.7]{cesa2006prediction} to get $\Est_n \leq \cO(\bar B^2 d \log(n))$, thereby improving the naive bound by a factor $d \log(n)$. This result is obtained in a more general setting, where no assumptions, other than boundedness, are placed on the observation sequence $y_1,\dots,y_n$. Here we refine and generalize this result in two directions: First, we allow for the more general feedback model in with multi-dimensional observations (\cref{ex:semi-bandits}). Second, we directly exploit the stochastic observation model to obtain a stronger result that does not require the observation noise to be bounded.

\begin{theorem}\label{thm:projected-ridge-error}
	Consider the linear observation setting with additive noise and quadratic divergence $e_\pi I_f e_g = \frac{1}{2} \|M_\pi (g-f)\|^2$, as described at the beginning of this section. Assume that $\max_{f,g \in \cH, \pi \in \Pi}\|M_\pi (f-g)\| \leq B$ and $\EE[\|\xi_t\|^2] \leq \sigma^2$.
	Then
	\begin{align*}
		\Est_n \leq (\sigma^2 + B^2) \EE[\log \left(\frac{\det V_n}{\det V_0}\right)]
	\end{align*}
	If in addition $\|M_\pi\| \leq L$ and $V_0 = \eta \eye_d$, then $\Est_n \leq (\sigma^2 + B^2)\log\big(1 + \frac{nL^2}{\eta d}\big)$. 
\end{theorem}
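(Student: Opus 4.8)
The plan is to recognize the projected least-squares estimator \eqref{eq:ridge} as follow-the-regularized-leader (FTRL) under the squared loss, reduce the estimation error to the online regret of this predictor against the comparator $f^*$ via realizability, and then bound that regret by a determinant potential. Set $\ell_t(f) = \|M_{\pi_t} f - y_t\|^2$. Since the ridge objective equals $\|f-\bar f_t\|_{V_t}^2$ up to an additive constant, the projection in \eqref{eq:ridge} is exactly $\hat f_t = \argmin_{f\in K}\{\|f\|_{V_0}^2 + \sum_{s<t}\ell_s(f)\}$, so FTRL theory applies. Writing $y_t = M_{\pi_t}f^* + \xi_t$ with $\EE_t[\xi_t]=0$, and noting $\hat f_t, M_{\pi_t}$ are measurable w.r.t.\ the conditioning in $\EE_t$, the decomposition $\ell_t(\hat f_t)-\ell_t(f^*) = \|M_{\pi_t}(\hat f_t-f^*)\|^2 - 2\langle M_{\pi_t}(\hat f_t-f^*),\xi_t\rangle$ has a cross term that vanishes in $\EE_t$; hence $\EE_t[\|M_{\pi_t}(\hat f_t-f^*)\|^2] = \EE_t[\ell_t(\hat f_t)-\ell_t(f^*)]$ and therefore $2\,\Est_n = \EE[\sum_{t=1}^n(\ell_t(\hat f_t)-\ell_t(f^*))]$.

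Next I would bound this regret with the be-the-leader inequality, $\sum_t \ell_t(\hat f_{t+1}) \le \sum_t\ell_t(f^*) + \|f^*\|_{V_0}^2$, leaving the stability term $\ell_t(\hat f_t)-\ell_t(\hat f_{t+1})$ to control. The crucial structural fact is that $\ell_t$ is exactly quadratic with Hessian $2M_{\pi_t}^\top M_{\pi_t}$, so $F_t = \|\cdot\|_{V_0}^2 + \sum_{s\le t}\ell_s$ is $2V_{t+1}$-strongly convex; combined with optimality of $\hat f_{t+1}$ over $K$ and convexity of $\ell_t$ this gives $\|\hat f_t-\hat f_{t+1}\|_{V_{t+1}}\le\|\nabla\ell_t(\hat f_t)\|_{V_{t+1}^{-1}}$, and keeping the exact second-order remainder in the Taylor expansion of $\ell_t$ around $\hat f_{t+1}$ lets one bound $\ell_t(\hat f_t)-\ell_t(\hat f_{t+1})$ by a quadratic form $r_t^\top A_t r_t$ in the residual $r_t = M_{\pi_t}\hat f_t - y_t$, where (using the matrix identity $M_{\pi_t}V_{t+1}^{-1}M_{\pi_t}^\top = C_t(\eye+C_t)^{-1}$ with $C_t = M_{\pi_t}V_t^{-1}M_{\pi_t}^\top$) the eigenvalues of $A_t$ are $1-(1+c_i)^{-2}\le 2\log(1+c_i)$, with $c_i$ the eigenvalues of $C_t$. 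Working with $V_{t+1}$ rather than $V_t$ keeps $\|A_t\|$ bounded regardless of the regularization level $\eta$.

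Then I would pass to conditional expectations. Since $\EE_t[r_t r_t^\top] = M_{\pi_t}(\hat f_t-f^*)(\hat f_t-f^*)^\top M_{\pi_t}^\top + \EE_t[\xi_t\xi_t^\top]$, with $\|M_{\pi_t}(\hat f_t-f^*)\|\le B$ (both $\hat f_t$ and $f^*$ lie in $K$, whose observed diameter is at most $B$ — this is where the projection is essential) and $\mathrm{tr}\,\EE_t[\xi_t\xi_t^\top]\le\sigma^2$, we get $\EE_t[r_t^\top A_t r_t] \le (\sigma^2+B^2)\,\lambda_{\max}(A_t) \le (\sigma^2+B^2)\,\mathrm{tr}(A_t)$. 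Summing, $\sum_t\mathrm{tr}(A_t) \le 2\sum_t\sum_i\log(1+c_i^{(t)}) = 2\sum_t\log\det(\eye+C_t) = 2\log(\det V_{n+1}/\det V_0)$ by Sylvester's identity and $V_{t+1}=V_t+M_{\pi_t}^\top M_{\pi_t}$; combining with the factor $\tfrac12$ between $\Est_n$ and the regret turns the ``$2$'' into a ``$1$'' and yields $\Est_n \le (\sigma^2+B^2)\,\EE[\log(\det V_{n+1}/\det V_0)]$, up to the lower-order additive $\tfrac12\|f^*\|_{V_0}^2$ (absorbed for the natural choices of $V_0$) and the harmless index shift $V_{n+1}\rightarrow V_n$. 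The second claim then follows from $\log\det V_n \le d\log(\tfrac1d\mathrm{tr}\,V_n)$ (concavity of $\log$) together with $\mathrm{tr}\,V_n \le \eta d + nL^2$ when $\|M_\pi\|\le L$ and $V_0=\eta\eye_d$.

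The main obstacle is pinning down the constant: the lossy first-order FTRL stability bound would introduce a spurious factor, so one must exploit the exact quadratic structure of the squared loss and observe that this cancels against the $\tfrac12$ in the chosen divergence. A secondary difficulty is that the projected (``lazy'') estimator is not literally the unconstrained ridge iterate, so the incremental-loss identity and the push-through computation must be justified through the constrained optimality conditions (which is also why the strong-convexity step is carried out with respect to $V_{t+1}$), and the additive bias term together with the self-normalized remainder $\|\hat f_{n+1}-f^*\|_{V_{n+1}}^2$ must be checked to be of lower order.
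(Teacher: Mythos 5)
Your proposal is correct and follows essentially the same route as the paper's proof: both reduce $\Est_n$ to the online squared-loss regret against $f^*$ via the vanishing noise cross-term, apply the be-the-leader inequality, bound the stability term by a quadratic form of the residual in $M_{\pi_t}V^{-1}M_{\pi_t}^\top$, split that residual in conditional expectation into the noise ($\sigma^2$) and bias ($B^2$, via the projection onto $K$) contributions, and telescope the eigenvalue sum against $\log\det$ via the matrix determinant lemma. The only differences are bookkeeping (your $1-(1+c_i)^{-2}\le 2\log(1+c_i)$ with $V_{t+1}$ versus the paper's $\lambda \le -\log(1-\lambda)$ with its post-update $V_t$, and the explicit handling of the index shift and the $\|f^*\|_{V_0}^2$ term), which do not change the argument.
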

\begin{remark}
    Note that by the \cite[Theorem 19.4]{lattimore2020bandit}, $\log \left(\frac{\det V_n}{\det V_0}\right)$ can further be upper bounded by $d \log\big(\frac{\text{trace} V_0 + n L^2}{d \det(V_0)^{1/d}}\big)$, which effectively results the desired bound.
\end{remark}

\begin{proof}
	
	The proof adapts \cite[Theorem 19.8]{gyorgy2013online} to multi-dimensional observations and takes advantage of the stochastic loss function by taking the expectation.
	
	First, define $l_t(f) = \frac{1}{2}\|M_{\pi_t} f - y_t\|^2$.  Then, using that $\EE_t[y_t] = M_{\pi_t}f^*$,
	\begin{align*}
		\Est_n = \EE[\sum_{t=1}^n \tfrac{1}{2}\|M_{\pi_t}  (f^* - \hat f_t)\|^2] &= \EE[\sum_{t=1}^n \tfrac{1}{2} \|M_{\pi_t}  \hat f_t - y_t\|^2  -  \tfrac{1}{2}\|M_{\pi_t}  f^* - y_t\|^2] \\
		&= \EE[\sum_{t=1}^n l_t(\hat f_t) - l_t(f^*)]
	\end{align*}
	Further, by directly generalizing \cite[Lemma 19.7]{gyorgy2013online}, we have that
	\begin{align}
		l_t(\hat f_{t+1})- l_t(\hat f_t) \leq \nabla l_t(\hat f_t) V_t^{-1} \nabla l_t(\hat f_t)
		&= (M_{\pi_t} \hat f_t - y_t)^\top M_{\pi_t}^\top V_t^{-1} M_{\pi_t} (M_{\pi_t} \hat f_t - y_t) \label{eq:loss difference bound}
	\end{align}

	We now start upper bounding the estimation error,
	\begin{align}
		\Est_n &\stackrel{(i)}{\leq} \|f^*\|^2 +  \EE[\sum_{t=1}^n \big(l_t(w_t) -l_t(w_{t+1})\big)] \nonumber \\
		&\stackrel{(ii)}{\leq} \|f^*\|^2 + \EE[\sum_{t=1}^n \big(\xi_t  + M_{\pi_t} (f^* - \hat  f_{t-1})\big) M_{\pi_t} V_t^{-1} M_{\pi_t} \big(\xi_t  + M_{\pi_t} (f^* - \hat f_{t-1})\big)]\nonumber \\
		&\stackrel{(iii)}{=} \|f^*\|^2 + \EE[\sum_{t=1}^n \xi_t M_{\pi_t} V_t^{-1} M_{\pi_t} \xi_t \big)] + \EE[\sum_{t=1}^n \bar x_t M_{\pi_t} V_t^{-1} M_{\pi_t} \bar x_t \big)]\nonumber \\
		&\stackrel{(iv)}{\leq} \|f^*\|^2 + \EE[\sum_{t=1}^n \lambda_{\max}(M_{\pi_t} V_t^{-1} M_{\pi_t}) \| \xi_t \|^2 ] + \EE[\sum_{t=1}^n  \lambda_{\max}(M_{\pi_t} V_t^{-1} M_{\pi_t}) \| \bar x_t\|^2 \big)]\nonumber \\
		&\stackrel{(v)}{\leq }\|f^*\|^2 + (\sigma^2 + B^2) \EE[\sum_{t=1}^n  \lambda_{\max}(M_{\pi_t} V_t^{-1} M_{\pi_t})] \label{eq:l_max_upper_1}
	\end{align}
	The inequality $(i)$ follows from \cite[Lemma 2.3]{shalev2012online}. For $(ii)$ we used \cref{eq:loss difference bound}.	For $(iii)$ we used that $\EE_t[\xi_t]=0$. In $(iv)$, we introduce the maximum eigenvalue $\lambda_{\max}(A)$ for $A \in \bR^{m \times m}$ and denote $\bar x_t = M_{\pi_t} (f^* - f_{t-1})$. Lastly, in $(v)$ we used that $\|\bar x_t\|^2 \leq B$ and $\EE_t[\|\xi_t\|^2] \leq \sigma^2$.
	
	We conclude the proof with basic linear algebra. Denote by $\lambda_i(A)$ the $i$-th eigenvalue of a matrix $M \in \bR^{m \times m}$. Using the generalized matrix determinant lemma, we get
	\begin{align*}
		\det(V_{t-1}) &= \det(V_t - M_{\pi_t}^\top M_{\pi_t})\\
		&= \det(V_t)\det(I - M_{\pi_t}^\top V_t^{-1}M_{\pi_t})\\
		&= \det(V_t)\prod_{i=1}^m (1 - \lambda_i(M_{\pi_t}^\top V_t^{-1} M_{\pi_t}))
	\end{align*}
	Note that $\lambda_i(M_{\pi_t}^\top V_t^{-1} M_{\pi_t}) \in (0,1]$. Next, using that $\log(1 - x) \leq -x$ for all $x < 1$, we get that	
	\begin{align*}
		\log\left(\frac{\det(V_{t-1})}{\det(V_t)}\right) = \sum_{i=1}^m \log(1 -\lambda_i(M_{\pi_t}^\top V_t^{-1} M_{\pi_t})) 
		\leq - \sum_{i=1}^m \lambda_i(M_{\pi_t}^\top V_t^{-1} M_{\pi_t})
	\end{align*}
	Rearranging the last display, and bounding the sum by its maximum element, we get
	\begin{align}
		\lambda_{\max}(M_{\pi_t}^\top V_t^{-1} M_{\pi_t}) \leq
		\sum_{i=1}^m\lambda_i(M_{\pi_t}^\top V_t^{-1} M_{\pi_t})  \leq \log\left(\frac{\det(V_t)}{\det(V_{t-1})}\right)  \label{eq:l_max_upper_2}
	\end{align}
	The proof is concluded by combining \cref{eq:l_max_upper_1,eq:l_max_upper_2}.
\end{proof}

\begin{remark}[Expected Regret]
	The beauty of \cref{thm:worst-case} is that the proof uses \emph{only} in-expectation arguments. This is unlike most previous analysis, that controls the regret via controlling tail-events, and bounds on the expected regret are then derived a-posteriori from high-probability bounds. In the context of linear bandits, \cref{thm:projected-ridge-error} leads to bound on the expected regret that only requires the noise variance to be bounded, whereas most previous work relies on the stronger sub-Gaussian noise assumption \cite[e.g.][]{abbasi2011improved}.
\end{remark}


\begin{remark}[Kernel Bandits / Bayesian Optimization] Using the standard `kernel-trick', the analysis can further be extended to the non-parametric setting where $\cH$ is an infinite-dimensional reproducing kernel Hilbert space (RKHS).
\end{remark}
\section{PAC to Regret Bounds}
\label{app:pac-2-regret}

\subsection{Proof of \cref{lem:p-dec bound}} \label{app:proof-pac-to-regret}
\begin{proof}
The Lagrangian for \cref{eq:pac-dec} is
\begin{align*}
    \acfpacdec_\eps(f) = \min_{\lambda \geq 0} \min_{\mu \in \sP(\Pi)} \max_{\nu \in \sP(\cM)} \delta_f \nu - \lambda (\mu I_f \nu - \eps^2).
\end{align*}
Reparametrize any $\mu \in \sP(\Pi)$ as $\bar{\mu}(p) = (1-p) e_{\pi^*_f} + p \mu_2$.
We bound $\acfdec_\eps$ by a function of $\acfpacdec_\eps$. 
Starting from \cref{eq:L2}, we have
\begin{align*}
    \acfdec_\eps(f) 
    &= \min_{\lambda \geq 0} \min_{\mu \in \sP(\Pi)} \max_{\nu \in \sP(\cH)} \mu \Delta_f \nu - \lambda (\mu I_{f} \nu - \epsilon^2)\\
    &= \min_{\lambda \geq 0} \min_{\bar \mu \in \sP(\Pi)} \max_{\nu \in \sP(\cH)} \bar \mu \Delta_f \nu - \lambda (\bar \mu I_{f} \nu - \epsilon^2)\\
    &= \min_{\lambda \geq 0}\min_{0 \leq p \leq 1} \min_{\mu_2 \in \sP(\Pi)} \max_{\nu \in \sP(\cH)} \delta_f \nu + p \mu_2 \Delta_f e_f - \lambda \bar \mu I_{f} \nu - \lambda \epsilon^2 \\
    &\leq \min_{\lambda \geq 0}\min_{0 \leq p \leq 1} \min_{\mu_2 \in \sP(\Pi)} \max_{\nu \in \sP(\cH)} \delta_f \nu + p \mu_2 \Delta_f e_f - \lambda p \mu_2 I_{f} \nu - \lambda \epsilon^2 \\
    &\leq \min_{0 \leq p \leq 1} \min_{\lambda^\prime \geq 0} \min_{\mu_2 \in \sP(\Pi)} \max_{\nu \in \sP(\cH)} \delta_f \nu - \lambda^\prime(\mu_2 I_{f} \nu - \frac{\epsilon^2}{p} ) + p \Delta_{\max} \\
    &\leq \min_{0 \leq p \leq 1} \acfpacdec_{\frac{\eps}{\sqrt{p}}}(f) + p \Delta_{\max}\,.
\end{align*}
\end{proof}

\subsection{Proof of \cref{lem:dec-bounds-linear}}\label{app:proof-dec-bounds-linear}

\begin{proof}[Proof of \cref{lem:dec-bounds-linear}]
	For the first part, note that
	\begin{align*}
		\acfpacdec_{\eps}(f) &= \min_{\mu \in \sP(\Pi)} \min _{\lambda \geq 0} \max_{\nu \in \sP(\cH)} \delta_f \nu - \lambda \mu I_f \nu +\lambda \eps^2\\
		&=\min_{\mu \in \sP(\Pi)} \min_{\lambda \geq 0} \max_{b \in \Pi} \max_{g \in \cH} \ip{\phi_b, g} - \ip{\phi_{\pi^*_f}, f} - \lambda \|g - f\|_{V(\mu)}^2 + \lambda \eps^2\\
		&\stackrel{(i)}{=}\min_{\mu \in \sP(\Pi)} \min_{\lambda \geq 0} \max_{b \in \Pi} \ip{\phi_b - \phi_{\pi^*_f}, f} + \frac{1}{4\lambda} \|\phi_b\|_{V(\mu)^{-1}}^2 + \lambda \eps^2\\
		&\stackrel{(ii)}{\leq}\min_{\mu \in \sP(\Pi)} \min_{\lambda \geq 0} \max_{b \in \Pi} \frac{1}{4\lambda} \|\phi_b\|_{V(\mu)^{-1}}^2 + \lambda \eps^2\\
		&= \min_{\mu \in \sP(\Pi)} \max_{b \in \Pi} \eps \|\phi_b\|_{V(\mu)^{-1}}\\
		&\stackrel{(iii)}{\leq} \eps \sqrt{d}\,.
	\end{align*}
	Equation $(i)$ follows by computing the maximizer attaining the quadratic form over $\cM = \bR^d$. The inequality $(ii)$ is by definition of $\pi^*_f$ and the last inequality $(iii)$ by the assumption that the reward is observed, respectively, $\phi_\pi \phi_\pi^\top \preceq M_\pi^\T M_\pi$, and the Kiefer–Wolfowitz theorem.
	
	The second part of the statement follows by combining \cref{lem:d*-to-dc,lem:p-dec bound}.
\end{proof}

\section{Coefficient Relations Results and Proofs}
\label{app:coef-relations}

\begin{lemma}
\label{lem:data-processing-with-envs}
	Assume \cref{asm:reward-data-processing} holds, i.e. 
	\begin{align}
		(\mu (r_f-r_g))^2 \leq \mu I_f e_g\,. \label{eq:data-processing-vector}
	\end{align}
	Then
	\begin{align*}
		\left (\sum_g \mu (r_f-r_g) \nu_g\right)^2 \leq \sum_g \big(\mu (r_f-r_g)\big)^2 \nu_g \leq \mu I_f \nu\,.
	\end{align*}
\end{lemma}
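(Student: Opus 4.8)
The statement is a two-step chain, and each step is elementary once the notation is unpacked. Write $a_g = \mu(r_f - r_g) = \EE_{\pi \sim \mu}[r_f(\pi) - r_g(\pi)] \in \bR$ for each $g \in \cH$, so that the quantity $\sum_g \mu(r_f-r_g)\nu_g$ is exactly $\sum_g a_g \nu_g$ and $\sum_g (\mu(r_f-r_g))^2 \nu_g = \sum_g a_g^2 \nu_g$.

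For the \emph{first} inequality, I would invoke convexity of $t \mapsto t^2$ together with the fact that $\nu \in \sP(\cH)$ is a probability vector (so $\nu_g \ge 0$ and $\sum_g \nu_g = 1$). Jensen's inequality then gives $\big(\sum_g a_g \nu_g\big)^2 \le \sum_g a_g^2 \nu_g$; equivalently this is Cauchy--Schwarz applied to the vectors $(a_g \sqrt{\nu_g})_g$ and $(\sqrt{\nu_g})_g$. No assumption is needed here.

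For the \emph{second} inequality, I would apply \cref{asm:reward-data-processing} in its vector form \eqref{eq:data-processing-vector}, namely $a_g^2 = (\mu(r_f - r_g))^2 \le \mu I_f e_g$, which holds for every fixed $g \in \cH$. Multiplying by $\nu_g \ge 0$ and summing over $g$ yields $\sum_g a_g^2 \nu_g \le \sum_g (\mu I_f e_g)\nu_g$. It then only remains to observe that $\sum_g (\mu I_f e_g)\nu_g = \mu I_f \nu$ by bilinearity of the form $\mu I_f \nu$ in the canonical basis $(e_g)_{g\in\cH}$ of $\bR^{\cH}$ (since $\nu = \sum_g \nu_g e_g$). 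Chaining the two displays gives the claim.

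\textbf{Main obstacle.} There is essentially no analytic difficulty; the only thing to be careful about is the bookkeeping between the matrix/vector shorthand ($\mu I_f \nu$, $\mu \Delta_f \nu$, $\mu(r_f-r_g)$) and the explicit sums over $g$, in particular that expanding $\nu$ over the standard basis legitimately turns $\mu I_f \nu$ into the $\nu$-weighted average of the scalars $\mu I_f e_g$, so that the pointwise assumption can be lifted to distributions over models.
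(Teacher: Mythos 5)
Your proof is correct and follows exactly the paper's own (one-line) argument: Jensen's inequality for the first inequality, then the pointwise data-processing bound \eqref{eq:data-processing-vector} summed against $\nu$ and bilinearity for the second. Your version simply spells out the bookkeeping that the paper leaves implicit.
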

\begin{proof}
	First Jensen's inequality, then \cref{eq:data-processing-vector}.
\end{proof}

\subsection{Proof of \cref{lem:dec-Delta_f-comparison}}\label{app:proof-lem-dec-Delta_f-comparison}
\begin{proof}[Proof of \cref{lem:dec-Delta_f-comparison}]
Note that
\begin{align*}
    \acdec_\eps(f) &= \min_{\mu \in \cP(\Pi)} \max_{\nu \in \cP{(\cH)}} \mu \Delta \nu \st \mu I_f \nu \leq \eps^2\\
    &= \min_{\mu \in \cP(\Pi)} \max_{\nu \in \cP{(\cH)}} \mu \Delta_f \nu + \sum_{g \in \cH} \sum_{\pi \in \Pi} \nu_g \mu_\pi (r_f(\pi) - r_g(\pi) ) \st \mu I_f \nu \leq \eps^2\\
    &\leq \min_{\mu \in \cP(\Pi)} \max_{\nu \in \cP{(\cH)}} \mu \Delta_f \nu + \sqrt{\mu I_f \nu} \st \mu I_f \nu \leq \eps^2\\
    &\leq \eps + \min_{\mu \in \cP(\Pi)} \max_{\nu \in \cP{(\cH)}} \mu \Delta_f \nu \st \mu I_f \nu \leq \eps^2\\
    &\leq \eps + \acfdec_\eps(f) \st \mu I_f \nu \leq \eps^2,
\end{align*}
where the first inequality is by \cref{lem:data-processing-with-envs}.
Also, by lower bounding the sum in the second inequality by $- \sqrt{\mu I_f \nu}$ we get left inequality.
\end{proof}

\subsection{Proof of \cref{lem:d*-to-dc}}\label{app:d*-to-dc-proof}
\begin{proof}[Proof of \cref{lem:d*-to-dc}]
	For the first inequality, using the definition of $\acdec_\eps(f, \Delta_f)$ and the AM-GM inequality:
	\begin{align}
		\acfdec_\eps(f)
		&= \min_{\lambda \ge 0} \max_{\nu \in \sP(\cH)} \min_{\mu \in \sP(\Pi)} \mu \Delta_f \nu - \lambda \mu I_f \nu + \lambda \epsilon^2 \nonumber \\
		&\le \min_{\lambda > 0} \max_{\nu \in \sP(\cH)} \min_{\mu \in \sP(\Pi)} \frac{(\mu \Delta_f \nu)^2}{4 \lambda \mu I_f \nu} + \lambda \epsilon^2 \\
		&= \min_{\lambda > 0} \frac{\Psi(f)}{4 \lambda} + \lambda \epsilon^2 = \eps \sqrt{\Psi(f)}\, .  \label{eq:info-ratio-inq}
	\end{align}
	Further, by \cref{eq:dc-inq} and \cref{asm:reward-data-processing} we have $\mu_\nu^\text{TS} \Delta_f \nu \le \sqrt{K \sum_{g, h \in \cH} \nu_g \nu_h e_{\pi^*_h} (r_g - r_f)^2} \le \sqrt{K \mu_f^\text{TS} I_f \nu}$, which gives $\Psi(f) \le K$. 
	Plugging this into \cref{eq:info-ratio-inq} gives the second inequality. 
\end{proof}

\subsection{Regret bound for \cref{alg:e2d} defined for $\Delta_f$ and $\acfdec_{\eps}$}
\begin{lemma}\label{lem:worst-case-deltaf}
	If \cref{asm:reward-data-processing} holds, then the regret of \AETD~(\cref{alg:e2d}) with $\Delta$ replaced with $\Delta_f$ is bounded as follows:
	\begin{align*}
		R_n &\leq  \max_{t \in [n], f \in \cH} \left\{\frac{ \acfdec_{\eps_t}(f)}{\eps_t^2}\right\} \left(\sum_{t=1}^n \eps_t^2 + \Est_n\right) + \sqrt{n \Est_n}
	\end{align*}
\end{lemma}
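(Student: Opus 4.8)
The plan is to mirror the proof of \cref{thm:worst-case} but to account for the fact that the sampling distribution now optimizes $\acfdec_{\eps_t}(\hat f_t)$, which uses the $f$-dependent gap $\Delta_{\hat f_t}$ rather than $\Delta$. The starting point is to rewrite the true regret $R_n = \EE[\sum_{t=1}^n \mu_t \Delta e_{f^*}]$ in terms of $\Delta_{\hat f_t}$ by adding and subtracting reward terms: $\mu_t \Delta e_{f^*} = \mu_t \Delta_{\hat f_t} e_{f^*} + \EE_{\pi \sim \mu_t}[r_{\hat f_t}(\pi) - r_{f^*}(\pi)]$, using $\Delta(\pi, f^*) = r_{f^*}(\pi^*_{f^*}) - r_{f^*}(\pi)$ and $\Delta_{\hat f_t}(\pi, f^*) = r_{f^*}(\pi^*_{f^*}) - r_{\hat f_t}(\pi)$. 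The first summand is handled exactly as in \cref{thm:worst-case} (with $\Delta$ replaced by $\Delta_{\hat f_t}$, $\nu$ realized as the Dirac at $f^*$, telescoping the $\lambda_t$-terms, dropping a negative term, and pulling the worst-case $\acfdec$ coefficient out of the sum), yielding the $\max_{t,f}\{\acfdec_{\eps_t}(f)/\eps_t^2\}(\sum_t \eps_t^2 + \Est_n)$ contribution. The residual term is $\EE[\sum_{t=1}^n (r_{\hat f_t}(\pi) - r_{f^*}(\pi))]$ evaluated under $\pi\sim\mu_t$.

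To control the residual, I would invoke \cref{asm:reward-data-processing} with the pair $(\hat f_t, f^*)$: $|\EE_{\pi\sim\mu_t}[r_{\hat f_t}(\pi) - r_{f^*}(\pi)]| \le \sqrt{\EE_{\pi\sim\mu_t}[D(M_{\hat f_t}(\pi)\|M_{f^*}(\pi))]} = \sqrt{\mu_t I_{\hat f_t} e_{f^*}}$. Summing over $t$, taking expectations, and applying Cauchy–Schwarz (Jensen for the concave square root) gives $\EE[\sum_{t=1}^n \sqrt{\mu_t I_{\hat f_t} e_{f^*}}] \le \sqrt{n \,\EE[\sum_{t=1}^n \mu_t I_{\hat f_t} e_{f^*}]} = \sqrt{n\,\Est_n}$, which is exactly the extra additive term in the statement.

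Putting the two pieces together yields the claimed bound. The one subtlety to be careful about — and the step I expect to need the most attention — is making sure that, in the $\Delta_{\hat f_t}$-version of the \cref{thm:worst-case} argument, the inequality $\lambda_t \eps_t^2 \le \acfdec_{\eps_t,\lambda_t}(\hat f_t)$ still goes through: this relied on $\mu_t^*\Delta e_f - \lambda_t \mu_t^* I_f e_f \ge 0$, and with $\Delta_{\hat f_t}$ one instead has $\mu_t^* \Delta_{\hat f_t} e_{\hat f_t} = r_{\hat f_t}(\pi^*_{\hat f_t}) - \EE_{\pi\sim\mu_t^*}[r_{\hat f_t}(\pi)] \ge 0$ since $\pi^*_{\hat f_t}$ maximizes $r_{\hat f_t}$, so the bound is preserved and the rest of the chain (steps $(i)$ and $(ii)$ in the proof of \cref{thm:worst-case}) carries over verbatim. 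I would also note that here $\acfdec_{\eps_t}$ rather than $\acfdec_{\eps_t,\lambda_t}$ appears because the algorithm is run with the constrained formulation; the Lagrangian representation \cref{eq:L2} identifies $\acfdec_{\eps_t}(\hat f_t) = \min_{\lambda\ge 0}\acfdec_{\eps_t,\lambda}(\hat f_t)$, so choosing the optimal $\lambda_t$ in the analysis recovers the constrained coefficient.
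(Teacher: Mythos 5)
Your proposal is correct and follows essentially the same route as the paper: decompose $\mu_t \Delta e_{f^*}$ into the $\Delta_{\hat f_t}$ part (handled exactly as in \cref{thm:worst-case}) plus the reward-estimation residual $\EE_{\pi\sim\mu_t}[r_{\hat f_t}(\pi)-r_{f^*}(\pi)]$, which is then bounded by $\sqrt{n\,\Est_n}$ via \cref{asm:reward-data-processing} and Cauchy--Schwarz. Your extra check that $\lambda_t\eps_t^2 \le \acfdec_{\eps_t,\lambda_t}(\hat f_t)$ still holds with $\Delta_{\hat f_t}$ is a detail the paper leaves implicit, and it is verified correctly.
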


\begin{proof} The proof follows along the lines of the proof of \cref{thm:worst-case}. The main difference is that when introducing $\Delta_f$, we get a term that captures the reward estimation error:
	\begin{align}
		R_n &\leq \sum_{t=1}^n  \EE[\acdec_{\eps_t}(\hat f_t) + \lambda_t (\mu_t I_{\hat f_t} e_{f^*} - \eps_t^2)] 
		+ \sum_{t=1}^n \EE[\mu_t (r_{\hat f_t} - r_{f^*})]\\
		&\leq \max_{t \in [n]}  \max_{f \in \cH} \left\{\frac{1}{\eps_t^2} \acdec_{\eps_t}(f) \right\} \sum_{t=1}^n \big(\eps_t^2 +  \EE[\mu_t I_{\hat f_t} e_{f^*}]\big) + \sqrt{n \Est_n}
	\end{align}
	For the last inequality, we used Cauchy-Schwarz and \cref{asm:reward-data-processing} to bound the error term, 
	$$\sum_{t=1}^n \EE[\mu_t (r_{\hat f_t} - r_{f^*})] \leq \sqrt{n \sum_{t=1}^n  \EE[ (\mu_t (r_{\hat f_t} - r_{f^*}))^2 ]} \leq \sqrt{n \sum_{t=1}^n  \EE[\mu_t I_{\hat f_t} e_f]} = \sqrt{n \Est_n}$$
\end{proof}

\subsection{Generalized Information Ratio}\label{app:generalized-information-ratio}
The generalized information ratio \citep{lattimore2021mirror} for $ \mu \in \sP(\Pi)$, $\nu \in \sP(\cH)$, and $\alpha > 1$ is defined as 
\begin{align}
    \Psi_{\alpha, f} (\mu, \nu) = \frac{(\mu \Delta_f \nu)^\alpha}{\mu I_f \nu}
\end{align}
For $\alpha = 2$, we get the standard information ratio introduced by \citet{russo2014learning} with $\nu$ as a prior over the model class $\cM$. Define $\Psi_\alpha(f) = \max_{\nu \in \cH} \min_{\mu \in \sP(\Pi)}\Psi_{\alpha,f}(\mu, \nu)$. To upper bound $\acdec_\eps$, we have the following lemma.
\begin{lemma}
For the reference model $f$, the ac-dec can be upper bounded as
    \begin{align}
    \acdec_{\epsilon}(f) \leq \min_{\lambda > 0}  \big\{\lambda^{\frac{1}{1-\alpha}} \alpha^{\frac{\alpha}{1-\alpha}} (\alpha-1) \Psi_\alpha(f)^{\frac{1}{\alpha-1}} + \lambda\epsilon^2 \big\} \label{lem:d-psi}
\end{align}
for $\alpha > 1$.
\end{lemma}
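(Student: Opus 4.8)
The plan is to mirror the proof of \cref{lem:d*-to-dc}, replacing the AM--GM step by a weighted Young's inequality tuned to the exponent $\alpha$. Starting from the Lagrangian representation \cref{eq:L3} of the ac-dec (which is valid by Sion's theorem), we have
\begin{align*}
    \acdec_\eps(f) = \min_{\lambda \geq 0} \max_{\nu \in \sP(\cH)} \min_{\mu \in \sP(\Pi)} \mu \Delta \nu - \lambda \mu I_f \nu + \lambda \eps^2\,.
\end{align*}
For a fixed $\nu$ and the minimizing $\mu$, the key is to bound $\mu \Delta_f \nu - \lambda \mu I_f \nu$ from above by a term involving $\Psi_{\alpha,f}(\mu,\nu) = (\mu \Delta_f \nu)^\alpha / (\mu I_f \nu)$. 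Writing $a = \mu \Delta_f \nu$ and $b = \mu I_f \nu$, we need $a - \lambda b \leq c_\alpha(\lambda)\, (a^\alpha/b)^{1/(\alpha-1)}$ for the right constant $c_\alpha(\lambda)$; this is a one-variable optimization over $b > 0$ of $a - \lambda b - \kappa a^{\alpha/(\alpha-1)} b^{-1/(\alpha-1)}$, and choosing $\kappa$ so the supremum over $b$ equals $0$ pins down $c_\alpha(\lambda) = \lambda^{1/(1-\alpha)} \alpha^{\alpha/(1-\alpha)}(\alpha-1)$. (Equivalently, apply Young's inequality $xy \leq x^p/p + y^q/q$ with the conjugate pair $p = \alpha/(\alpha-1)$, $q = \alpha$ to the product that appears after factoring out powers of $b$.)

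Once this pointwise inequality is in hand, the argument is: first, since $\mu \Delta_f \nu \geq 0$ and we are taking a minimum over $\mu$, we may restrict attention to the $\mu$ minimizing $\Psi_{\alpha,f}(\mu,\nu)$; this replaces $(\mu \Delta_f \nu)^\alpha/(\mu I_f\nu)$ by $\min_\mu \Psi_{\alpha,f}(\mu,\nu)$, and then maximizing over $\nu$ gives $\Psi_\alpha(f)$. Second, we deal with the gap $\Delta$ versus $\Delta_f$: here I would either invoke \cref{asm:reward-data-processing} as in \cref{lem:dec-Delta_f-comparison} to pass to $\acfdec$, or note that the statement as written uses $\acdec$ and so one should carry the extra reward-estimation term along as in \cref{lem:data-processing-with-envs}; the cleanest route is to prove the bound for $\acfdec_\eps(f)$ and then absorb the $\eps$-slack, since for $\alpha = 2$ this recovers exactly $\eps\sqrt{\Psi_2(f)}$ consistent with \cref{lem:d*-to-dc}. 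Putting the pieces together yields
\begin{align*}
    \acdec_\eps(f) \leq \min_{\lambda > 0} \left\{ \lambda^{\frac{1}{1-\alpha}} \alpha^{\frac{\alpha}{1-\alpha}} (\alpha-1)\, \Psi_\alpha(f)^{\frac{1}{\alpha-1}} + \lambda \eps^2 \right\}\,,
\end{align*}
which is the claim.

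The main obstacle I anticipate is getting the constant $c_\alpha(\lambda)$ exactly right, in particular checking the algebra of the exponents $\tfrac{1}{1-\alpha}$, $\tfrac{\alpha}{1-\alpha}$, and $\tfrac{1}{\alpha-1}$ and verifying the $\alpha = 2$ sanity check against \cref{lem:d*-to-dc}; a sign error or a misapplied Hölder conjugate would silently produce the wrong constant. A secondary subtlety is the interchange of $\min$ and $\max$ and the legitimacy of restricting to the $\Psi_{\alpha,f}$-minimizing $\mu$: this needs $\mu I_f\nu > 0$ on the relevant region, which holds because otherwise $\mu \Delta_f \nu$ can be made $0$ and the term is harmless, so the reduction is safe. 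Everything else is routine once the Young-type inequality is stated correctly.
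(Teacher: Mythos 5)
Your proof is correct and takes essentially the same route as the paper's: the paper's AM--GM step with $x_2=\cdots=x_\alpha$ is exactly the Young's inequality with conjugate exponents $\alpha$ and $\alpha/(\alpha-1)$ that you describe, and both arguments reduce to the same pointwise bound $\mu\Delta\nu-\lambda\,\mu I_f\nu\le \lambda^{\frac{1}{1-\alpha}}\alpha^{\frac{\alpha}{1-\alpha}}(\alpha-1)\,\Psi_{\alpha,f}(\mu,\nu)^{\frac{1}{\alpha-1}}$ plugged into the Lagrangian \cref{eq:L3}, with your constant $c_\alpha(\lambda)$ matching the paper's. Your caution about $\Delta$ versus $\Delta_f$ is warranted (the paper's own proof silently writes $\mu\Delta\nu$ in the numerator of the information ratio although $\Psi_{\alpha,f}$ is defined with $\Delta_f$), but this does not change the structure of the argument.
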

\begin{proof}
    We start by noting that for $x_1,\dotsc,x_{\alpha} \ge 0$, from AM-GM we have that $\alpha (x_1 \cdot x_2 \dotsm x_\alpha)^{1/\alpha} \le x_1 + \cdots + x_\alpha$. Substituting $x_2 = x_3 = \cdots x_\alpha$, we get $\alpha \cdot x_1^{\frac{1}{\alpha}} \cdot x_2^{\frac{\alpha-1}{\alpha}} - x_1 \le (\alpha-1)x_2$. Writing $x_1 = \lambda \mu I_f \nu $ and $x_2 = \alpha^{\frac{\alpha}{1-\alpha}} \left( \frac{(\mu \Delta \nu)^\alpha}{\lambda \mu I_f \nu} \right)^{\frac{1}{\alpha-1}}$ and using the previous inequality with the $\acdec_\eps$ program gives the result. 
\end{proof}
The information ratio $ \Psi_{\alpha, f}(\mu, \nu)$ can be thought of as the Bayesian information ratio in \citep{russo2014learning} where the expectation is taken over the distribution $\nu$ of possible environments. However, for information gain $I_f$, \citep{russo2014learning} use entropy difference in the posterior distribution of $\pi^*_f$ before and after the observation is revealed. 


\subsection{Proof of \cref{lem:linear-dec}}\label{app:proof-lem-linear-dec}

\begin{proof}[Proof of \cref{lem:linear-dec}]
	\begin{align}
		\acfdec_{\eps}(\hat f_t) &= \min_{\mu \in \sP(\Pi)} \min_{\lambda \geq 0} \max_{b \in \Pi} \max_{g \in \cH}  \ip{\phi_b, g} - \ip{\phi_\mu, \hat f_t} - \lambda \|g - \hat f_t \|_{V(\mu)}^2  + \lambda \eps^2 \nonumber \\
		&= \min_{\lambda \geq 0} \min_{\mu \in \sP(\Pi)} \max_{b \in \Pi} \ip{\phi_b - \phi_\mu, \hat f_t} + \frac{1}{4\lambda} \|\phi_b\|_{V(\mu)^{-1}}^2 + \lambda \eps^2 \label{eq:convex-dec} \,.
	\end{align}
	The first equality is by definition, and the second equality follows from solving the quadratic maximization over $g \in \cM = \bR^d$. To show that the problem is convex in $\mu$, note that taking inverses of positive semi-definite matrices $X,Y$ is a convex function, i.e. $((1-\eta) X + \eta Y)^{-1} \preceq (1-\eta) X^{-1} + \eta Y^{-1}$. In particular, $V((1-\eta) \mu_1 + \eta \mu_2)^{-1} \preceq (1-\eta) V(\mu_1)^{-1} + \eta V(\mu_2)^{-1}$.  With this the claim follows.
\end{proof}

\section{Convex Program for Fixed $\lambda$} 
\label{app:lp-fixed-lambda}

Take \cref{eq:convex-dec} and fix $\lambda > 0$. 
Then we have the following saddle-point problem:
\begin{align*}
&\min_{\mu \in \sP(\Pi)} \max_{b \in \Pi} \ip{\phi_b - \phi_\mu, \hat f_t} + \frac{1}{4\lambda} \|\phi_b\|_{V(\mu)^{-1}}^2 + \lambda \eps^2 \\
&= \lambda \eps^2 + \min_{\mu \in \sP(\Pi)} \max_{b \in \Pi} \ip{\phi_b - \phi_\mu, \hat f_t} + \frac{1}{4\lambda} \|\phi_b\|_{V(\mu)^{-1}}^2
\end{align*}
Up to the constant additive term, this saddle point problem is equivalent to the following convex program
\begin{align}
	\min_{y \in \bR, \mu \in \bR^{\Pi}} y \st  y &\geq  \ip{\phi_b - \phi_\mu, \hat f_t} + \frac{1}{4\lambda} \|\phi_b\|_{V(\mu)^{-1}}^2 \quad \forall b \in \Pi \nonumber \\
	\eye \mu &= 1 \nonumber\\
	\mu_\pi &\geq 0\quad \forall \pi \nonumber
\end{align}

\section{Experiments}\label{app:experiments}
All experiments below were run on a semi-bandit problem with a "revealing action", as alluded to in the paragraph below \cref{lem:dec-bounds-linear}.
Specifically, we assume a semi-bandit model where $\cH = \bR^d$ and the features are $\phi_\pi \in \bR^d$.
For an instance $f^* \in \cM$, the reward function is $r_{f^*} = \langle \phi_\pi, f^* \rangle$ for all $\pi \in \Pi$.
There is one revealing (sub-optimal) action $\hat \pi \neq \pi^*_{f^*}$.
The observation for any action $\pi \neq \hat \pi$ is
\begin{align}
M_{f^*}(\pi) = \mathcal{N}(\langle \phi_\pi, f^* \rangle, 1)
\end{align}

Define $M_{\hat \pi} = [\phi_{\pi_1}, \dots, \phi_{\pi_{|\Pi|}}]^\top$. 
Then the observation for action $\hat \pi$ is
\begin{align}
M_{f^*}(\hat \pi) = \mathcal{N}(M_{\hat \pi} f^*, \eye_d)
\end{align}

Thus, the information for any action $\pi \neq \hat \pi$ is
\begin{align}
I_{f}(g, \pi) = \frac{\sigma^2}{2} \langle \phi_\pi, g - f \rangle^2
\end{align}

while the information for action $\hat \pi$ is 
\begin{align}
I_{f}(g, \hat \pi) = \frac{\sigma^2}{2} \|M_{\hat \pi} (g - f)\|^2 = \frac{\sigma^2}{2} \sum_\pi \langle \phi_\pi, g - f\rangle^2
\end{align}

For this setting $\Est_n \le \cO(d \log(n))$ (see \cref{app:ridge}).

\subsection{Experimental Setup}
Our main objective is to compare our algorithm \AETD~to the fixed-horizon \ETD~algorithm by \citet{foster2021statistical}. 
\AETD~and \ETD~were implemented by using the procedure described in \cref{sec:comp-aspects}.
Both \AETD~and \ETD~need to solve the inner convex problem in \cref{lem:linear-dec}.
To do so we use Frank-Wolfe \citep{frank1956algorithm,dunn1978conditional,jaggi2013revisiting} for 100 steps and warm-starting the optimization at the solution from the previous round, $\mu_{t-1}$. 
For \AETD~we further perform a grid search over $\lambda \in [0, \max_{g \in \cM} \eps^{-2} \acfdec (g)]$ (with a discretization of $50$ points) to optimize over lambda within each iteration of Frank-Wolfe. 
For both the \ETD~and \AETD~algorithm we used the version with the gaps $\Delta$ replaced with $\Delta_f$, since we noticed that both algorithms performed better with $\Delta_f$.
For \ETD, the scale hyperparameter $\lambda$ was set using $\lambda = \sqrt{\frac{n}{4 \log(n)}}$ as mentioned in \citet[Section 6.1.1]{foster2021statistical}.
While for \AETD~we set the hyper-parameter $\eps_t^2 = d / t$. 
Further, we compare to standard bandit algorithms: Upper Confidence Bound (\UCB)~and Thompson Sampling (\TSa)~\citep{lattimore2020bandit}.

\subsection{Experiment 1}
In this experiment, we aim to demonstrate the advantage of having an anytime algorithm.
Specifically, we tune $\lambda$ in the \ETD~algorithm for different horizons $n=200, 500, 1000, 2000$, but run it for a fixed horizon of $n=2000$.
As such, we expect our algorithm \AETD~to perform better than \ETD~when $\lambda$ was tuned for the incorrect horizons (i.e. $n=200, 500, 1000$). 
The feature dimension is $d=3$.
The number of decisions is $|\Pi| = 10$.
We generated the features $\phi_\pi$ for each $\pi \in \Pi$ and parameter $f^* \in \mathbb{R}^d$ randomly at the beginning and then kept them fixed throughout the experimentation.
100 independent runs were performed for each algorithm.

The results of the experiment can be seen as the left plot in \cref{fig:semi-bandit}. 
As expected, our algorithm \AETD~performs better than \ETD~(for $n=200, 500, 1000$). 
This indicates that the \ETD~algorithm is sensitive to different settings of $\lambda$, which is problematic when the horizon is not known beforehand.
Whereas our \AETD~algorithm performs well even when the horizon is not known.

\begin{figure}[t]
	\includegraphics{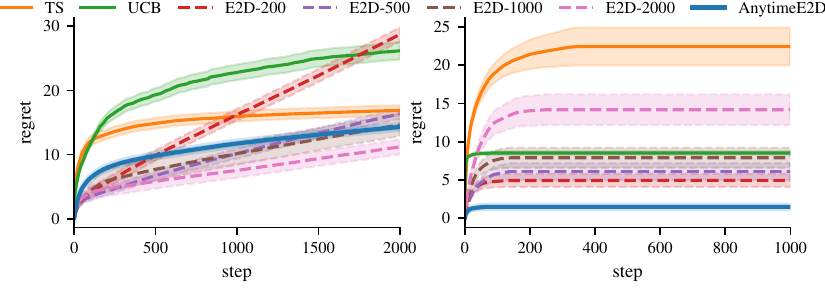}
	\caption{
    Running $\AETD$, TS, UCB, and \ETD~optimized for different horizons $n \in \{200, 500, 1000, 2000\}$.
    Left: The result for horizon $n=2000$, and the feature space dimension $d=3$.
    Right: The result for horizon $n=1000$, and the feature space dimension $d=30$.
    }
\label{fig:semi-bandit}
\end{figure}

\subsection{Experiment 2}
In this experiment, we investigate the case when $n < d^4$. 
As pointed out below \cref{lem:p-dec bound}, we expect improvement in this regime as the regret bound of our algorithm is $R_n \leq \min \{d \sqrt{n}, d^{1/3} n^{2/3}\}$, while the default, fixed-horizon \ETD~ algorithm cannot achieve these bounds simultaneously and one has to pick one of $d\sqrt{n}$ or $d^{1/3} n^{2/3}$ beforehand for setting the scale hyperparameter $\lambda$.
It is standard that the choice of $\lambda$ is made according to the $d \sqrt{n}$ regret bound for \ETD~\cite{foster2021statistical}(which is not optimal when $n <\!\!< d^4$), especially, if the horizon is not known beforehand.
Thus, we set the horizon to $n=1000$ and the dimension of the feature space to $d=30$, which gives us that $n=1000 <\!\!< 810000 = d^4$.
The rest of the setup and parameters are the same as in the previous experiment except for the features $\phi_\pi$ and $f^*$ which are again chosen randomly in the beginning and then kept fixed throughout the experiment.

The results of the experiment can be seen as the right plot in \cref{fig:semi-bandit}. 
As expected, our algorithm \AETD~performs better than \ETD, \UCB, and \TSa.
This indicates that indeed, \AETD~is likely setting $\lambda$ appropriately to achieve the preferred $d^{1/3}n^{2/3}$ regret rate for small horizons. 
The poor performance of the other algorithms can be justified, since \ETD~is optimized based on the worse $d \sqrt{n}$ regret rate (for small horizons), while the \UCB~and \TSa~algorithms are not known to get regret better than $d \sqrt{n}$.

\end{document}